\newtheorem{thm}{Theorem}
\newtheorem{lem}{Lemma}
\newtheorem{cor}{Corollary}
\newtheorem{defn}{Definition}
\newtheorem{ass}{Assumption}
\newcolumntype{C}[1]{>{\centering\let\newline\\\arraybackslash\hspace{0pt}}m{#1}}
\newcommand\tagthis{\addtocounter{equation}{1}\tag{\theequation}}
\DeclareMathOperator{\Ocal}{\mathcal{O}}
\renewcommand{\top}{T}
\newcommand{\set}[1]{\left\{#1\right\}}
\newcommand{\sets}[1]{\{#1\}}
\newcommand{\norms}[1]{\Vert#1\Vert}
\newcommand{\dom}[1]{\mathrm{dom}\left(#1\right)}
\newcommand{\R}{\mathbb{R}}
\newcommand{\zero}[1]{{\boldsymbol{0}}}
\newcommand{\E}{\mathbb{E}} 
\title{On the Convergence to a Global Solution of Shuffling-Type Gradient Algorithms}
\author{%
   Lam M. Nguyen\thanks{Equally contributed. Corresponding author: Lam M. Nguyen}  \\
   IBM Research \\
   Thomas J. Watson Research Center \\
   Yorktown Heights, NY, USA \\
   \texttt{LamNguyen.MLTD@ibm.com} \\
   \And
   Trang H. Tran$^*$ \\
  School of ORIE\\ 
  Cornell University \\ Ithaca, NY, USA \\
  \texttt{htt27@cornell.edu} \\
}
\begin{document}

\maketitle

\begin{abstract}
Stochastic gradient descent (SGD) algorithm is the method of choice in many machine learning tasks thanks to its scalability and efficiency in dealing with large-scale problems. In this paper, we focus on the shuffling version of SGD which matches the mainstream practical heuristics. We show the convergence to a global solution of shuffling SGD for a class of non-convex functions under over-parameterized settings. Our analysis employs more relaxed non-convex assumptions than previous literature. Nevertheless, we maintain the desired computational complexity as shuffling SGD has achieved in the general convex setting. 
\end{abstract}

\section{Introduction}\label{sec_intro}


In the last decade, neural network-based models have shown great success in many machine learning applications such as natural language processing \citep{icml_Collobert,coling_Goldberg}, computer vision and pattern recognition \citep{iclrGoodfellow, cvpr_He015}. The training task of many learning models
boils down to the following finite-sum minimization problem:
\begin{equation}\label{ERM_problem_01}
    \min_{w \in \mathbb{R}^d} \left\{ F(w) := \frac{1}{n} \sum_{i=1}^n f(w ; i)  \right\},
\end{equation}
where $f(\cdot; i) : \R^d\to\R$ is smooth and possibly non-convex for $i \in [n] := \set{1,\cdots,n}$. 
Solving the empirical risk minimization \eqref{ERM_problem_01} had been a difficult task for a long time due to the non-convexity and the complicated learning models. Later progress with stochastic gradient descent (SGD) and its variants \citep{RM1951,AdaGrad,KingmaB14} have shown great performance in training deep neural networks. These stochastic first-order methods are favorable thanks to its scalability and efficiency in dealing with large-scale problems.
At each iteration SGD samples an index $i$ uniformly from the set $\{1, \dots, n\}$, and uses the individual gradient $\nabla f(\cdot; i)$ to update the weight.

While there has been much attention on the theoretical aspect of the traditional i.i.d. (independently identically distributed) version of SGD \citep{Nemirovski2009,ghadimi2013stochastic,Bottou2018}, practical heuristics often use without-replacement data sampling schemes.
Also known as shuffling sampling schemes, these methods generate some random or deterministic permutations of the index set  $\{1, 2, \dots, n\}$ and apply gradient updates using these permutation orders.
Intuitively, a collection of such $n$ individual updates is a pass over all the data, or an epoch.
One may choose to create a new random permutation at the beginning of each epoch (in Random Reshuffling scheme) or use a random permutation for every epoch (in Single Shuffling scheme).
Alternatively, one may use a Incremental Gradient scheme with a fixed deterministic order of indices.
In this paper, we use the term unified shuffling SGD for SGD method using \textit{any} data permutations, which includes the three special schemes described above. 

Although shuffling sampling schemes usually show a better empirical performance than SGD \citep{bottou2009curiously}, the theoretical guarantees for these schemes are often more limited than vanilla SGD version, due to the lack of statistical independence. Recent works have shown improvement in computational complexity for shuffling schemes over SGD in various settings \citep{Gurbuzbalaban2019, haochen2019random, safran2020good, nagaraj2019sgd, nguyen2020unified, mishchenko2020random,2020Ahn}.
In particular, in a general non-convex setting, shuffling sampling schemes improve the computational complexity in terms of $\hat{\varepsilon}$ for SGD from $\Ocal\left(\sigma^2/\hat{\varepsilon}^2\right)$ to $\Ocal\left(n\sigma/\hat{\varepsilon}^{3/2}\right)$, where $\sigma$ is the bounded variance constant
 \citep{ghadimi2013stochastic,nguyen2020unified,mishchenko2020random}\footnote{The computational complexity is the number of (individual) gradient computations needed to reach an $\hat{\varepsilon}$-accurate stationary point (i.e. a point $\hat{w} \in \R^d$ that satisfies $\| \nabla F(\hat{w}) \|^2 \leq \hat{\varepsilon}$.) }. 
 We summarize the detailed literature for multiple settings later in Table \ref{table_1}.

While global convergence is a desirable property for neural network training, the non-convexity landscape of complex learning models leads to difficulties in finding the global minimizer. In addition, there is little to no work studying the convergence to a global solution of shuffling-type SGD algorithms {for a general non-convex setting}. The closest line of research investigates the Polyak-Lojasiewicz (PL)
condition (a generalization of strong-convexity),  which demonstrates similar convergence rates as the strongly convex rates for shuffling SGD methods \citep{haochen2019random,2020Ahn,nguyen2020unified}. In another direction,  \cite{gower2021sgd} and \cite{khaled2020better} investigates the global convergence for some class of non-convex functions, however for vanilla SGD 
method. \cite{beznosikov2021random} investigate a random shuffle version of variance reduction methods (e.g. SARAH algorithm \cite{Nguyen2017sarah}), but this approach only can show convergence to stationary points.
With a target on shuffling SGD methods and specific learning architectures, we come up with the central question of this paper:
\begin{center}
\textit{How can we establish the convergence to global solution for a class of non-convex functions using shuffling-type SGD algorithms? Can we exploit the structure of neural networks to achieve this goal?}
\end{center}
We answer this question affirmatively, and our contributions are summarized below.

\textbf{Contributions}. 
\begin{itemize}
    \item We investigate a new framework for the convergence of a shuffling-type gradient algorithm to a global solution. We consider a relaxed set of assumptions and discuss their relations with previous settings. We show that our average-PL inequality (Assumption~\ref{ass_PL_fi}) holds for a wide range of neural networks equipped with squared loss function. 
    \item Our analysis generalizes the class function called star-$M$-smooth-convex. This class contains non-convex functions and is more general than the class of star-convex smooth functions with respect to the minimizer (in the over-parameterized settings). In addition, our analysis does not use any bounded gradient or bounded weight assumptions. 
    \item We show the total complexity of $\mathcal{O}(\frac{n}{\hat{\varepsilon}^{3/2}})$ for a class of non-convex functions to reach an $\hat{\varepsilon}$-accurate global solution. This result matches the same gradient complexity to a  stationary point for unified shuffling methods in non-convex settings, however, we are able to show the convergence to a global minimizer. 
\end{itemize}


\subsection{Related Work}

In recent years, there have been different approaches to investigate the global convergence for machine learning optimization. This includes a popular line of research that studies some specific neural networks and utilizes their architectures. The most early works show the global convergence of Gradient Descent (GD) for simple linear networks and two-layer networks \citep{brutzkus2017sgd,soudry2018implicit,pmlr-v97-arora19a,du2019gradient}. These results are further extended to deep learning architectures  \citep{pmlr-v97-allen-zhu19a,pmlr-v97-du19c,ZouG19nips}. This line of research continues with Stochastic Gradient Descent (SGD) algorithm, which proves the global convergence of SGD for deep neural networks for some probability depending on the initialization process and the number of input data \citep{brutzkus2017sgd, pmlr-v97-allen-zhu19a,zou2018stochastic,ZouG19nips}. The common theme that appeared in most of these references is the over-parameterized setting, which means that the number of parameters in the network are excessively large \citep{brutzkus2017sgd,soudry2018implicit,pmlr-v97-allen-zhu19a,pmlr-v97-du19c,ZouG19nips}. This fact is closely related to our setting, and we will discuss it throughout our paper. 

\textbf{Polyak-Lojasiewicz (PL) condition and related assumptions.}
An alternative approach is to investigate some conditions on the optimization problem that may guarantee global convergence. A popular assumption is the Polyak-Lojasiewicz (PL) inequality, a generalization of strong-convexity \citep{Polyak1964,polyak_condition,nesterov2006cubic}. 
Using this PL assumption, it can be shown that (stochastic) gradient descent achieves the same theoretical rate as in the strongly convex setting (i.e linear convergence for GD and sublinear convergence for SGD) \citep{polyak_condition,pmlr-v54-de17a,gower2021sgd}.
Recent works demonstrate similar results for shuffling type SGD \citep{haochen2019random,2020Ahn,nguyen2020unified}, both for unified and randomized shuffling schemes.
On the other hand, \citep{schmidt2013fast, pmlr-v89-vaswani19a} propose to use a new assumption called the Strong Growth Condition (SGC) that controls the rates at which the stochastic gradients decay comparing to the full gradient. 
This condition implies that the stochastic gradients and their variances converge to zero at the optimum solution \citep{schmidt2013fast, pmlr-v89-vaswani19a}. While the PL condition for $F$ implies that every stationary point of $F$ is also a global solution, the SGC implies that such a point is also a stationary point of every individual function. 
However, complicated models as deep feed-forward neural networks generally have non-optimal stationary points \citep{polyak_condition}. Thus, these assumptions are somewhat strong for non-convex settings. 

Although there are plenty of works investigating the PL condition for the objective function $F$ \citep{pmlr-v54-de17a,pmlr-v89-vaswani19a,gower2021sgd}, not many materials devoted to study the PL inequality for the individual functions $f(\cdot;i)$. A recent work \citep{sankararaman2020impact} analyzes SGD with the specific notion of gradient confusion for over-parameterized settings where the individual functions satisfy PL condition. They show that the neighborhood where SGD converges linearly depends on the level of gradient confusion (i.e. how much the individual gradients are negatively correlated). Taking a different approach, we investigate the PL property for individual functions and further show that our condition holds for a general class of neural networks with quadratic loss. 

\textbf{Over-paramaterized settings for neural networks.}
Most of the modern learning architectures contain deep and large networks, where the number of parameters are often far more than the number of input data. 
This leads to the fact that the objective loss function is trained closer and closer to zero.
Understandably, in such settings all the individual functions $f(\cdot;i)$ are minimized simultaneously at 0 and they share a common minimizer. 
This condition is called the interpolation property (see e.g. \citep{schmidt2013fast, pmlr-v80-ma18a,pmlr-v108-meng20a, pmlr-v130-loizou21a}) and is studied well in the literature (see e.g. \citep{zhou2019sgd, gower2021sgd}). 
For a comparison, functions satisfying the  strong growth condition necessarily satisfy the interpolation property. 
This property implies zero variance of individual gradients at the global minimizer, which allows good behavior for SGD near the solution. 
In this work, we slightly change this assumption which requires a small variance up to some level of the threshold $\varepsilon$. Note that when letting $\varepsilon \to 0$, our assumption exactly recovers the interpolation property.

\textbf{Star-convexity and related conditions.} There have been many attentions to a class of structured non-convex functions called star-convex \citep{nesterov2006cubic, ieee_lee_starconvex, aaai_2021_Bjorck}. Star-convexity can be understood as convexity between an arbitrary point $w$ and the global minimizer $w_*$. The name star-convex comes from the fact that each sublevel set is star-shaped \citep{nesterov2006cubic, ieee_lee_starconvex}. \cite{zhou2019sgd} shows that if SGD follows a star-convex path and there exists a common global minimizer for all component functions, then SGD converges to a global minimum. 

In recent progress, \cite{hinder2020near} considers the class of quasar-convex functions, which further generalizes star-convexity. This property was introduced originally in  \citep{JMLR-hardtGDquasar} under the name `weakly quasi-convex', and investigated recently in literature  \citep{hinder2020near, jin2020convergence, gower2021sgd}. This class uses a parameter $\zeta \in (0,1]$ to control the non-convexity of the function, where $\zeta = 1$ yeilds the star-convexity and $\zeta $ approaches $0$ indicates more non-convexity  \citep{hinder2020near}. 
Intuitively, quasar-convex functions are unimodal on all lines that pass through a global minimizer. 
\cite{gower2021sgd} investigates the performance of SGD for smooth and quasar-convex functions using an expected residual assumption (which is comparable to the interpolation property). 
They show a convergence rate of $\mathcal{O}(1/{\sqrt{K}})$ for i.i.d. 
sampling SGD with the number of total iterations $K$,
which translates to the computational complexity of  $\Ocal\left(1/\hat{\varepsilon}^2\right)$.
To the best of our knowledge, this paper is the first work studying the relaxation of star-convexity and global convergence for SGD with shuffling sample schemes, not for the i.i.d. version.



%
%

\section{Theoretical Setting}\label{sec_motivation}

We first present the shuffling-type gradient algorithm below. Our convergence results hold for any permutation of the training data $\{1,2,\dots, n\}$, including deterministic and random ones. 
Thus, our theoretical framework is general and applicable for any shuffling strategy, including Incremental Gradient, Single Shuffling, and Random Reshuffling. 

\begin{algorithm}[hpt!]
   \caption{(Shuffling-Type Gradient Algorithm for Solving \eqref{ERM_problem_01})}\label{sgd_shuffling}
\begin{algorithmic}[1]
   \STATE {\bfseries Initialization:} Choose an initial point $\tilde{w}_0\in\dom{F}$.
   \FOR{$t=1,2,\dots,T $}
   \STATE Set $w_0^{(t)} := \tilde{w}_{t-1}$;
   \STATE Generate any permutation  $\pi^{(t)}$ of $[n]$ (either deterministic or random);
   \FOR{$i = 1,\dots, n$}
    \STATE Update $w_{i}^{(t)} := w_{i-1}^{(t)} - \eta_i^{(t)} \nabla f ( w_{i-1}^{(t)} ; \pi^{(t)} ( i ) )$; 
   \ENDFOR
   \STATE Set $\tilde{w}_t := w_{n}^{(t)}$;
   \ENDFOR
\end{algorithmic}
\end{algorithm} 

We further specify the choice of learning rate $\eta_i^{(t)}$ in the detailed analysis. Now we proceed to describe the set of assumptions used in our paper. 




\begin{ass}\label{ass_lowerbound_fi}
Suppose that $f_i^{*} := \min_{w\in\R^d}f(w;i) > -\infty$, $i \in \{1, \dots, n\}$. 
\end{ass}
\begin{ass}\label{ass_Lsmooth_fi}
Suppose that $f(\cdot; i)$ is $L$-smooth for all $i \in \{1, \dots, n\}$, i.e. there exists a constant $L \in (0, +\infty)$ such that:
\begin{equation}\label{eq:Lsmooth_basic}
\norms{ \nabla f(w;i) - \nabla f(w' ;i)} \leq L \norms{ w - w'}, \quad \forall w, w' \in \R^d. 
\end{equation}
\end{ass}

Assumption~\ref{ass_lowerbound_fi} is required in any algorithm to guarantee the well-definedness of \eqref{ERM_problem_01}.
In most applications, the component losses are bounded from below. 
By Assumption \ref{ass_Lsmooth_fi}, the objective function $F$ is also $L$-smooth. 
This Lipschitz smoothness Assumption is widely used for gradient-type methods. 
In addition, we denote the minimum value of the objective function $F_* = \min_{w \in \mathbb{R}^d} F ( w )$. It is worthy to note the following relationship between $F_*$ and the component minimum  values: 
\begin{align*}
    F_* &= \min_{w \in \mathbb{R}^d} F ( w )  
    = \frac{1}{n} \min_{w \in \mathbb{R}^d} \left(\sum_{i=1}^n f (w; i) \right) \geq \frac{1}{n} \sum_{i=1}^n \min_{w \in \mathbb{R}^d} \left( f (w; i) \right) = \frac{1}{n} \sum_{i=1}^n f_i^*. \tagthis \label{eq_loss_great_01}
\end{align*}

We are interested in the case where the set of minimizers of $F$ is not empty. The equality $F_* =\frac{1}{n} \sum_{i=1}^n f_i^*$ attains if and only if a minimizer of $F$ is also the common minimizer for all component functions. This condition implies that the variance of individual functions is 0 at the common minimizer. 





\subsection{PL Condition for Component Functions}\label{subsec_PL_f_i}
Now we are ready to discuss the Polyak-Lojasiewicz condition as follows. 
\begin{defn}[Polyak-Lojasiewicz condition]\label{def_PL_f}
We say that $f$ satisfies Polyak-Lojasiewicz (PL) inequality for some constant $\mu >0$ if
\begin{equation}\label{eq:PL_fi}
\norms{ \nabla f(w) }^2 \geq 2 \mu [f(w) - f^{*}], \quad \forall w \in \R^d, 
\end{equation}
where $f^{*} := \min_{w\in\R^d}f(w)$. 
\end{defn}

The PL condition for the objective function $F$ is sufficient to show a global convergence for (stochastic) gradient descent \citep{polyak_condition,nesterov2006cubic,Polyak1964}.
It is well known that a function satisfying the PL condition is not necessarily convex \citep{polyak_condition}. However, this assumption on $F$ is somewhat strong because it implies that every stationary point of $F$ is also a global minimizer. Our goal is to consider a class of non-convex function which is more relaxed than the PL condition on $F$, while still having the good global convergence properties. 
In this paper, we formulated an assumption called ``average PL inequality'', specifically for the finite sum setting: 
\begin{ass}\label{ass_PL_fi}
Suppose that $f(\cdot; i)$ satisfies average PL inequality for some constant $\mu >0$ such that
\begin{equation}\label{eq:average_PL_fi}
\frac{1}{n}\sum_{i=1}^{n} \norms{ \nabla f(w;i) }^2 \geq 2 \mu \frac{1}{n}\sum_{i=1}^{n} [f(w;i) - f_i^{*}], \quad \forall w \in \R^d. 
\end{equation}
where $f_i^{*} := \min_{w\in\R^d}f(w;i)$. 
\end{ass}

\textbf{\textit {Comparisons.}}
Assumption \ref{ass_PL_fi} is weaker than assuming the PL inequality for every component function $f(\cdot;i)$.  
In general setting, Assumption \ref{ass_PL_fi} is not comparable to assuming the PL inequality for $F$. Formally, if $F$ satisfies PL the condition for some parameter $\tau > 0$, then we have:
\begin{equation}\label{eq:rela1}
2 \tau [F(w) - F_*] \leq \| \nabla F( w ) \|^2 \leq \frac{1}{n} \sum_{i=1}^{n} \| \nabla f ( w ; i ) \|^2.
\end{equation}
However, by equation \eqref{eq_loss_great_01} we have that $[F(w) - F_*] \leq \frac{1}{n} \sum_{i=1}^{n} [f(w;i) - f_i^{*}]$. Therefore, the PL inequality for each function $f(\cdot;i)$, cannot directly imply the PL condition on $F$ and vice versa.  


In the interpolated setting where there is a common minimizer for all component function $f(\cdot;i)$, it can be shown that the PL condition on $F$ is stronger than our average PL assumption:
\begin{align*}
    2 \tau \frac{1}{n} \sum_{i=1}^{n} [f(w;i) - f_i^{*}] = 2 \tau [F(w) - F_*] \leq \| \nabla F( w ) \|^2 \leq \frac{1}{n} \sum_{i=1}^{n} \| \nabla f ( w ; i ) \|^2.
\end{align*}
On the other hand, our assumption cannot imply the PL inequality on $F$ unless we impose a strong relationship that upper bound the sum of individual squared gradients $\frac{1}{n} \sum_{i=1}^{n} \| \nabla f ( w ; i ) \|^2$ in terms of the full squared gradient $\| \nabla F( w ) \|^2$, for every $w \in \R^d$ . 
For these reasons, the average PL Assumption \ref{ass_PL_fi} is arguably more reasonable than assuming the PL inequality for the objective function $F$. Moreover, we show that Assumption \ref{ass_PL_fi} holds for a general class of neural networks with a final bias layer and squared loss function. We have the following theorem. 
\begin{thm}\label{thm_PL_fi}
Let $\{( {x}^{(i)},y^{(i)})\}_{i=1}^n$ is a training data set where $ {x}^{(i)} \in \mathbb{R}^m$ is the input data and $y^{(i)} \in \mathbb{R}^c$ is the  output data for $i = 1, \dots, n$. 
We consider an architecture $h(w;i)$ with  $w$ be the vectorized weight and $h$ consists of a final bias layer $b$: 
\begin{align*}
    h(w;i) & = W^{\top} z(\theta; i) + b, 
\end{align*}
where $w = \textbf{\text{vec}}(\{\theta, W, b\})$ and $z(\theta;i)$ are some inner architectures, which can be chosen arbitrarily. Next, we consider the squared loss $f(w ; i) = \frac{1}{2} \| h(w;i) - y^{(i)} \|^2$. Then 
\begin{align*}
    \norms{ \nabla f(w;i) }^2 \geq 2 [f(w;i) - f_i^{*}], \quad \forall w \in \R^d, \tagthis \label{eq:PL_fi_NN}
\end{align*}
where $f_i^{*} := \min_{w\in\R^d}f(w;i)$. 
\end{thm}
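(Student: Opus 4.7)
The plan is to exploit the fact that the bias layer $b$ appears additively (and hence linearly) in $h(w;i)$, which forces both the partial gradient with respect to $b$ and the per-sample minimum $f_i^*$ to take very clean forms. Concretely, I would first compute $\nabla_b f(w;i)$ via the chain rule. Since $h(w;i) = W^\top z(\theta;i) + b$, the Jacobian of $h$ with respect to $b$ is the identity, so
\begin{align*}
\nabla_b f(w;i) = h(w;i) - y^{(i)}.
\end{align*}
This gives $\|\nabla_b f(w;i)\|^2 = \|h(w;i) - y^{(i)}\|^2 = 2 f(w;i)$.

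Next I would observe that because $\nabla f(w;i)$ is the concatenation of $\nabla_\theta f(w;i)$, $\nabla_W f(w;i)$, and $\nabla_b f(w;i)$, we have the trivial bound
\begin{align*}
\|\nabla f(w;i)\|^2 \geq \|\nabla_b f(w;i)\|^2 = 2 f(w;i).
\end{align*}
The final ingredient is to identify $f_i^*$. Since $f(w;i) = \tfrac{1}{2}\|h(w;i)-y^{(i)}\|^2 \geq 0$, we have $f_i^* \geq 0$. Moreover, for any fixed $\theta, W$, the choice $b = y^{(i)} - W^\top z(\theta;i)$ achieves $h(w;i) = y^{(i)}$ and thus $f(w;i) = 0$, so in fact $f_i^* = 0$. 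Combining these gives
\begin{align*}
\|\nabla f(w;i)\|^2 \geq 2 f(w;i) = 2[f(w;i) - f_i^*],
\end{align*}
which is exactly \eqref{eq:PL_fi_NN}.

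There is no real obstacle here; the key conceptual point is that the presence of a free bias layer at the output makes each individual loss $f(\cdot;i)$ trivially minimizable (to zero) regardless of how complicated the inner architecture $z(\theta;i)$ is, and the gradient with respect to that bias exactly reproduces the residual whose squared norm is $2f(w;i)$. The only thing to be mildly careful about is the vectorization convention: I would briefly note that the full gradient $\nabla f(w;i)$ stacks the partials $\nabla_\theta f$, $\nabla_W f$, $\nabla_b f$, so its squared Euclidean norm is the sum of the three block squared norms and therefore dominates any single block. No smoothness, no structural assumption on $z(\theta;i)$, and no assumption on the data is required.
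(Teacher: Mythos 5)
Your proof is correct and follows essentially the same route as the paper: both lower-bound $\|\nabla f(w;i)\|^2$ by the squared norm of the bias block of the gradient, use the identity Jacobian of $h$ with respect to $b$ to identify that block with the residual $h(w;i)-y^{(i)}$, and then invoke nonnegativity of the per-sample optimal loss. The only cosmetic difference is that you pin down $f_i^*=0$ exactly by exhibiting an interpolating bias, whereas the paper only uses $f_i^*\geq \phi_i^* \geq 0$ (with $\phi_i$ the squared loss as a function of the network output); both suffice.
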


Therefore, for this application, Assumption~\ref{ass_PL_fi} holds with $\mu = 1$. 

\subsection{Small Variance at Global Solutions}
In this section, we change the interpolation property in previous literature \citep{pmlr-v80-ma18a,pmlr-v108-meng20a, pmlr-v130-loizou21a} by a small threshold. 
For any global solution $w_*$ of $F$, let us define
\begin{align*}
    \sigma_*^2 := \inf_{w_* \in \mathcal{W}_*} \left(   \frac{1}{n}\sum_{i=1}^n \left\| \nabla f (w_*; i) \right\|^2 \right). \tagthis \label{variance_solution}
\end{align*}

We can show that when there is a common minimizer for all component functions (i.e. when the equality $F_* =\frac{1}{n} \sum_{i=1}^n f_i^*$ holds), the best variance $\sigma_*^2$ is 0. It is sufficient for our Theorem to impose a $\mathcal{O}(\varepsilon)$-level upper bound on the variance $\sigma_*^2$:
\begin{ass}\label{ass_small_variance}
Suppose that the best variance at $w_*$ is small, that is, for $\varepsilon > 0$
\begin{align*}
    \sigma_*^2 \leq P \varepsilon,  \tagthis \label{eq_small_variance}
\end{align*}
for some $P > 0$. 
\end{ass}
It is important to note that in current non-convex literature, Assumption \ref{ass_small_variance} alone (or, assuming $\sigma_*^2 = 0$ alone) is not sufficient enough 
to guarantee a global convergence property for SGD. Typically, some other conditions on the good landscape of the loss function are needed to complement the over-parameterized setting. Thus, we have motivation to introduce our next assumption.

\subsection{Generalized Star-Smooth-Convex Condition for Shuffling Type Algorithm}

We introduce the definition of star-smooth-convex function as follows. 
\begin{defn}\label{defn_Msmoothconvex}
The function $g$ is star-$M$-smooth-convex with respect to a reference point $\hat{w} \in \R^d$ if
\begin{equation}\label{eq:new_property_fi}
\norms{ \nabla g(w) - \nabla g(\hat{w}) }^2 \leq M \langle \nabla g(w) - \nabla g(\hat{w}), w - \hat{w} \rangle, \quad \forall w \in \R^d.
\end{equation}
\end{defn}


It is well known that when $g$ is $L$-smooth and convex \citep{nesterov2004}, we have the following general inequality for every $w, w' \in \R^d$:
\begin{align}\label{eq:smoothconvex}
     \norms{ \nabla g(w) - \nabla g(w') }^2 \leq L \langle \nabla g(w) - \nabla g(w'), w - w' \rangle 
\end{align}

Our class of star-smooth-convex function requires a similar inequality to hold only for the special point $w' = \hat{w}$. 
Interestingly, this is related to a class of star-convex functions, which satisfies the convex inequality for the minimizer $\hat{w}$: 
\begin{align*}
     \text{(star-convexity w.r.t } \hat{w}) \quad  g(w) - g(\hat{w})  \leq \langle \nabla g(w), w - \hat{w} \rangle, \ \forall w \in \mathbb{R}^d, \tagthis \label{eq_defn_star_convex}
\end{align*}
This class of functions contains non-convex objective losses and is well studied in the literature (see e.g. \citep{zhou2019sgd}). 
Our Lemma \ref{lem_conditions} shows that 
the class of star-smooth-convex function is broader than the class of $L$-smooth and star-convex functions
. Therefore, our problem of interest is non-convex in general.

\begin{lem}\label{lem_conditions}
The function $g$ is star-$M$-smooth-convex with respect to $\hat{w}$ for some constant $M > 0$ if one of the two following conditions holds: 
\begin{enumerate}
    \item $g$ is $L$-smooth and convex.
    \item $g$ is $L$-smooth and $g$ is star-convex with respect to $\hat{w}$.
\end{enumerate}
\end{lem}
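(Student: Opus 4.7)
The plan is to verify each of the two sufficient conditions separately, producing in each case an explicit constant $M$ for which the defining inequality \eqref{eq:new_property_fi} holds.

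For the first condition ($g$ is $L$-smooth and convex), the claim is immediate from the classical co-coercivity of the gradient for smooth convex functions that the authors state in \eqref{eq:smoothconvex}: applying that inequality with $w' = \hat{w}$ yields \eqref{eq:new_property_fi} with $M = L$. No additional argument is required here.

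The second condition takes a bit more care, because star-convexity is only assumed at the reference point $\hat{w}$, which (consistent with the discussion around \eqref{eq_defn_star_convex}) is taken to be a global minimizer of $g$, so that $\nabla g(\hat{w}) = 0$. The strategy is to bound $\norms{\nabla g(w) - \nabla g(\hat{w})}^2 = \norms{\nabla g(w)}^2$ above by a multiple of $g(w) - g(\hat{w})$ using $L$-smoothness, and then bound $g(w) - g(\hat{w})$ above by $\langle \nabla g(w), w - \hat{w}\rangle$ using star-convexity. Concretely, I would first invoke the $L$-smooth descent lemma at the point $w - \tfrac{1}{L}\nabla g(w)$ and use optimality of $\hat{w}$ to deduce the standard estimate $\tfrac{1}{2L}\norms{\nabla g(w)}^2 \leq g(w) - g(\hat{w})$, and then chain this with \eqref{eq_defn_star_convex} to obtain $\norms{\nabla g(w) - \nabla g(\hat{w})}^2 \leq 2L \,\langle \nabla g(w) - \nabla g(\hat{w}),\, w - \hat{w}\rangle$. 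This is \eqref{eq:new_property_fi} with $M = 2L$.

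Neither case poses a substantive technical obstacle; the only subtle point is the implicit convention in the second case that $\hat{w}$ is a global minimizer of $g$, which is needed so that optimality of $\hat{w}$ combined with the descent lemma delivers the gradient-norm bound $\norms{\nabla g(w)}^2 \le 2L[g(w)-g(\hat w)]$. Once this is clarified, both cases reduce to standard inequalities from smooth (star-)convex analysis, and the proof is essentially a bookkeeping of constants.
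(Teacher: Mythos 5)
Your proposal is correct and follows essentially the same route as the paper's own proof: case 1 is the co-coercivity inequality \eqref{eq:smoothconvex} specialized to $w' = \hat{w}$ with $M = L$, and case 2 combines the descent-lemma bound $\norms{\nabla g(w)}^2 \le 2L[g(w)-g(\hat w)]$ (using $\nabla g(\hat w)=0$ at the minimizer) with star-convexity to get $M = 2L$. The subtle point you flag --- that $\hat{w}$ must be a global minimizer of $g$ in the second case --- is exactly the convention the paper uses.
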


\begin{proof}
The first statement of Lemma~\ref{lem_conditions} follows directly from equation \eqref{eq:smoothconvex}. We have that $g$ is star-$M$-smooth-convex with respect to any reference point and $M = L$. 

Now we proceed to the second statement. From the star-convex property of $g$ with respect to $\hat{w}$, we have
\begin{align*}
      g(w) - g(\hat{w})  \leq \langle \nabla g(w), w - \hat{w} \rangle, \ \forall w \in \mathbb{R}^d, 
\end{align*}
and $\nabla g(\hat{w}) = 0$ since $\hat{w}$ is the global minimizer of $g$. On the other hand, $g$ is $L$-smooth and we have 
\begin{align*}
    g(\hat{w}) \leq g \left( w - \frac{1}{L} \nabla g(w) \right) \leq g(w) - \frac{1}{2 L} \| \nabla g(w) \|^2, 
\end{align*}
which is equivalent to $\| \nabla g(w) \|^2 \leq 2L [ g(w) - g(\hat{w}) ]$, $i \in [n]$. Since $\nabla g(\hat{w}) = 0$, $i \in [n]$, we have for $\forall w \in \mathbb{R}^d$
\begin{align*}
    \| \nabla g(w) - \nabla g(\hat{w}) \|^2 \leq 2L [ g(w) - g(\hat{w}) ] \overset{\eqref{eq_defn_star_convex}}{\leq} 2L \langle \nabla g(w) - \nabla g(\hat{w}), w - w_* \rangle. 
\end{align*}
This is a star-$M$-smooth-convex function as in Definition~\ref{defn_Msmoothconvex} with $M= 2L$.
\end{proof}

For the analysis of shuffling type algorithm in this paper, we consider the general assumption called the \textit{generalized star-smooth-convex condition for shuffling algorithms}: 

\begin{ass}\label{ass_new_property_fi_alg}
Using Algorithm~\ref{sgd_shuffling}, let us assume that there exist some constants $M >0$ and $N \geq 0$ such that at each epoch $t = 1,\dots,T$, we have for $i = 1,\dots,n$:
\begin{align*}
\norms{ \nabla f(w_{i-1}^{(t)};\pi^{(t)} ( i )) - \nabla f(w_*;\pi^{(t)} ( i )) }^2 & \leq M \langle \nabla f(w_{i-1}^{(t)}; \pi^{(t)} ( i )) - \nabla f(w_*;\pi^{(t)} ( i )), w_{i-1}^{(t)} - w_* \rangle \\
& \qquad + N \frac{1}{n} \sum_{i=1}^{n} \| w_{i}^{(t)} - w_{0}^{(t)} \|^2, \tagthis \label{eq:new_property_fi_alg}
\end{align*}
where $w_*$ is a global solution of $F$.
\end{ass}

We note that it is sufficient for our analysis to assume the case $N = 0$, i.e. the individual function $f(\cdot;i)$ is star-$M$-smooth-convex with respect to $w_*$ for every $i= 1,\dots, n$ as in Definition~\ref{defn_Msmoothconvex}. In that case, the assumption does not depend on the algorithm progress. 

Assumption \ref{ass_new_property_fi_alg} is more flexible than the star-$M$-smooth-convex one in \eqref{eq:new_property_fi} because an additional term $N \frac{1}{n} \sum_{i=1}^{n} \| w_{i}^{(t)} - w_{0}^{(t)} \|^2$ for some constant $N > 0$  allows for extra flexibility in our setting, where the right-hand side term $\langle \nabla f(w;i) - \nabla f(w_*;i), w - w_* \rangle$ could be negative for some $w \in \mathbb{R}^d$.
 
Note that we do not impose any assumptions on bounded weights or bounded gradients. 
Therefore, the term $ \frac{1}{n} \sum_{i=1}^{n} \| w_{i}^{(t)} - w_{0}^{(t)} \|^2$ cannot be uniformly bounded by any universal constant. 

\section{New Framework for Convergence to a Global Solution}\label{sec_main}

In this section, we present our theoretical results. 
Our Lemma \ref{lem_convex_02} first provides a recursion to bound the squared distance term $\| \tilde{w}_{t} - w_* \|^2$:
\begin{lem}\label{lem_convex_02}
Assume that Assumptions~\ref{ass_lowerbound_fi}, \ref{ass_Lsmooth_fi}, \ref{ass_PL_fi}, and \ref{ass_new_property_fi_alg} hold. 
Let $\{\tilde{w}_{t}\}_{t=1}^{T}$ be the sequence generated by Algorithm~\ref{sgd_shuffling} with $0 < \eta_t \leq \min\left\{ \frac{n}{ 2M} , \frac{1}{2L} \right\}$. For every $\gamma > 0$  we have
\begin{align*}
    \| \tilde{w}_{t} - w_* \|^2 & \leq \left( 1 + C_1 \eta_t^3 \right) \| \tilde{w}_{t-1} - w_* \|^2 + C_2 \eta_t \sigma_{*}^2 - C_3 \eta_t  [ F ( \tilde{w}_{t-1} ) - F_* ].  \tagthis \label{eq_lem_convex_02}
\end{align*}
where $w_*$ is a global solution of $F$, $F_* = \min_{w \in \mathbb{R}^d} F(w)$, and
\begin{align*}
    \begin{cases}
    C_1 = \frac{8 L^2}{3 } + \frac{14 N L^2 }{M} + \frac{4\gamma L^4}{6M}, \\
    C_2 = \frac{2}{M} + 1 + \frac{5}{6 L^2 } + \frac{8 N }{3 M L^2}  + \frac{ 5 \gamma}{12M}, \\
    C_3 = \frac{\gamma}{\gamma+1} \frac{ \mu}{ M} .
    \end{cases} \tagthis \label{define_constant_01}
\end{align*}
\end{lem}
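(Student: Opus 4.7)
The plan is to perform a one-epoch analysis. Assuming the per-step learning rate is $\eta_i^{(t)} = \eta_t/n$, the identity $\sum_{i=1}^n \eta_i^{(t)} \nabla f(w_*;\pi^{(t)}(i)) = \eta_t \nabla F(w_*) = 0$ will be used repeatedly. First I would write
\begin{align*}
\tilde{w}_t - w_* \;=\; \tilde{w}_{t-1} - w_* \;-\; \sum_{i=1}^n \eta_i^{(t)} \nabla f(w_{i-1}^{(t)}; \pi^{(t)}(i)),
\end{align*}
take squared norms, and in the resulting cross term subtract off $\nabla f(w_*;\pi^{(t)}(i))$ at no cost to work with gradient \emph{differences}. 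Splitting $\tilde{w}_{t-1} - w_* = (w_{i-1}^{(t)} - w_*) + (\tilde{w}_{t-1} - w_{i-1}^{(t)})$, the first piece matches exactly the right-hand side of Assumption~\ref{ass_new_property_fi_alg} and will produce the main descent quantity, while the second piece is an epoch-drift residual that I would control by Young's inequality with the free parameter $\gamma$.

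Next I would invoke Assumption~\ref{ass_new_property_fi_alg} to lower-bound the matched piece by $\tfrac{1}{M}\|\nabla f(w_{i-1}^{(t)};\pi^{(t)}(i)) - \nabla f(w_*;\pi^{(t)}(i))\|^2$ minus the drift correction $\tfrac{N}{M}\cdot\tfrac{1}{n}\sum_j \|w_j^{(t)} - w_0^{(t)}\|^2$. The squared-gradient-difference sum is then related, via the triangle inequality together with~\eqref{variance_solution}, to a multiple of $\sum_i \|\nabla f(\tilde{w}_{t-1}; i)\|^2$ minus error terms proportional to $\sigma_*^2$ and $L^2\|\tilde{w}_{t-1} - w_{i-1}^{(t)}\|^2$. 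The average-PL Assumption~\ref{ass_PL_fi}, combined with the inequality $F(\tilde{w}_{t-1}) - F_* \leq \tfrac{1}{n}\sum_i [f(\tilde{w}_{t-1};i) - f_i^*]$ from~\eqref{eq_loss_great_01}, then converts $\tfrac{1}{n}\sum_i \|\nabla f(\tilde{w}_{t-1}; i)\|^2$ into $2\mu [F(\tilde{w}_{t-1}) - F_*]$. This is precisely the source of the $-C_3 \eta_t [F(\tilde{w}_{t-1}) - F_*]$ term, with the coefficient $C_3 = \tfrac{\gamma}{\gamma+1}\tfrac{\mu}{M}$ emerging from the Young split used to absorb the residual cross term.

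The remaining work is to bound the leftover pieces — the quadratic term $\|\sum_i \eta_i^{(t)}\nabla f(\cdot)\|^2$, the drift $\tfrac{1}{n}\sum_j \|w_j^{(t)} - w_0^{(t)}\|^2$, and the $\gamma$-weighted cross term — in terms of $\|\tilde{w}_{t-1} - w_*\|^2$ and $\sigma_*^2$ only. I would bound each $\|w_j^{(t)} - w_0^{(t)}\|^2$ by $j\sum_{k\leq j}(\eta_k^{(t)})^2 \|\nabla f(w_{k-1}^{(t)}; \pi^{(t)}(k))\|^2$; each individual gradient is split via $L$-smoothness into $\|\nabla f(w_{k-1}^{(t)}; i) - \nabla f(\tilde{w}_{t-1}; i)\|^2$ plus $\|\nabla f(\tilde{w}_{t-1}; i)\|^2$, and the latter split further against $\sigma_*^2$ and $L^2\|\tilde{w}_{t-1} - w_*\|^2$ by applying $L$-smoothness between $\tilde{w}_{t-1}$ and $w_*$. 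The stepsize restriction $\eta_t \leq \min\{n/(2M),\, 1/(2L)\}$ is exactly what is needed to absorb the resulting self-referential drift terms back into the right-hand side with harmless constants.

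The main obstacle is the careful bookkeeping in these last steps: the drift bound feeds back into itself through the gradient norms, the free parameter $\gamma$ must thread through the Young splits so that the coefficient of $[F(\tilde{w}_{t-1}) - F_*]$ lands exactly at $\tfrac{\gamma}{\gamma+1}\tfrac{\mu}{M}$, and the characteristic cubic-in-$\eta_t$ term $C_1 \eta_t^3 \|\tilde{w}_{t-1} - w_*\|^2$ must emerge from the composition of two $\eta_t$'s hidden in the drift with one $\eta_t$ from the outer step. No conceptual novelty is required beyond the interplay of Assumption~\ref{ass_new_property_fi_alg}, Assumption~\ref{ass_PL_fi}, and $L$-smoothness; the technical challenge is purely in matching the specific constants in~\eqref{define_constant_01}.
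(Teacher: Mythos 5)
Your plan follows essentially the same route as the paper: a one-epoch distance recursion in which Assumption~\ref{ass_new_property_fi_alg} converts the cross term into $-\tfrac{1}{M}$ times the squared gradient difference, a Young split with parameter $\gamma$ transfers the gradient norms from the iterates $w_{i-1}^{(t)}$ back to $\tilde{w}_{t-1}$ (producing the $\tfrac{\gamma}{\gamma+1}$ factor in $C_3$), Assumption~\ref{ass_PL_fi} together with \eqref{eq_loss_great_01} yields the $-C_3\eta_t[F(\tilde{w}_{t-1})-F_*]$ term, and the drift sums $\tfrac{1}{n}\sum_j\|w_j^{(t)}-w_0^{(t)}\|^2$ are absorbed via the stepsize restrictions into the $C_1\eta_t^3$ and $\sigma_*^2$ terms. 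The only (immaterial) difference is that the paper telescopes the per-iteration recursion for $\|w_i^{(t)}-w_*\|^2$ inside the epoch — so the inner product already appears at $w_{i-1}^{(t)}-w_*$ and no extra splitting of $\tilde{w}_{t-1}-w_*$ is needed — and routes the argument through an intermediate lemma before introducing $\gamma$.
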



Rearranging the results of Lemma \ref{lem_convex_02}, we have
\begin{align*}
    F ( \tilde{w}_{t-1} ) - F_*  & \leq \frac{1}{C_3} \left( \frac{1}{\eta_t} + C_1 \eta_t^2 \right) \| \tilde{w}_{t-1} - w_* \|^2 - \frac{1}{C_3 \eta_t} \| \tilde{w}_{t} - w_* \|^2 + \frac{C_2}{C_3} \sigma_{*}^2. \tagthis \label{eq_thm_xxx_0001}
\end{align*}
Therefore, with an appropriate choice of learning rate that guarantee $\left({1}/{\eta_t} + C_1 \eta_t^2 \right)\leq {1}/{\eta_{t-1}}$, 
we can unroll the recursion from Lemma \ref{lem_convex_02}. Thus we have our main result in the next Theorem.

\begin{thm}\label{thm_nonconvex_01}
Assume that Assumptions~\ref{ass_lowerbound_fi}, \ref{ass_Lsmooth_fi}, \ref{ass_PL_fi}, and \ref{ass_new_property_fi_alg} hold. 
Let $\{\tilde{w}_{t}\}_{t=1}^{T}$ be the sequence generated by Algorithm~\ref{sgd_shuffling} with the learning rate $\eta_i^{(t)} = \frac{\eta_t}{n}$ where $0 < \eta_t \leq \min\left\{ \frac{n}{ 2M} , \frac{1}{2L} \right\}$. 

Let the number of iterations $T = \frac{\lambda}{\varepsilon^{3/2}}$ for some $\lambda > 0$ and  $\varepsilon > 0$. 
Constants $C_1$, $C_2$, and $C_3$ are defined in \eqref{define_constant_01} for any $\gamma > 0$. 
We further define $K = 1 + C_1 D^3 \varepsilon^{3/2}$ and specify the learning rate $\eta_t = K \eta_{t-1} = K^t \eta_{0}$ and $\eta_0 = \frac{D \sqrt{\varepsilon}}{K\exp(\lambda C_1 D^3)}$ such that $\frac{D \sqrt{\varepsilon}}{K} \leq \min\left\{ \frac{n}{ 2M} , \frac{1}{2L} \right\}$ for some constant $D > 0$. Then we have 
\begin{align*}
    \frac{1}{T} \sum_{t=1}^{T} [ F ( \tilde{w}_{t-1} ) - F_* ] & \leq \frac{K\exp(\lambda C_1 D^3)}{C_3 D \lambda} \| \tilde{w}_{0} - w_* \|^2 \cdot \varepsilon + \frac{C_2}{C_3} \sigma_{*}^2, \tagthis \label{eq_nonconvex_01}
\end{align*}
where $F_* = \min_{w \in \mathbb{R}^d} F(w)$ and $\sigma_{*}^2$ is defined in \eqref{variance_solution}. 
\end{thm}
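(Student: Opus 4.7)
The plan is to start from the rearranged recursion \eqref{eq_thm_xxx_0001} that the authors have already extracted from Lemma~\ref{lem_convex_02}, and to collapse it by telescoping over $t = 1,\dots,T$. The entire argument hinges on the fact that the schedule $\eta_t = K^t\eta_0$ is chosen so that $\frac{1}{\eta_t} + C_1 \eta_t^2 \leq \frac{1}{\eta_{t-1}}$; once this holds, the $\|\tilde{w}_t - w_*\|^2$ terms on the right- and left-hand sides cancel across consecutive epochs and we are left with a single residual distance term plus the accumulated $\sigma_*^2$ noise.

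First I would verify that the prescribed schedule stays in the range required by Lemma~\ref{lem_convex_02}. Since $\eta_t = K^t \eta_0$ is increasing in $t$, it suffices to bound $\eta_T$. Using $K = 1 + C_1 D^3 \varepsilon^{3/2}$ and $T = \lambda/\varepsilon^{3/2}$, the elementary inequality $(1+x)^{1/x} \leq e$ for $x>0$ gives $K^T \leq \exp(\lambda C_1 D^3)$. Combined with the definition of $\eta_0$, this yields $\eta_T = K^T \eta_0 \leq D\sqrt{\varepsilon}/K$, which is at most $\min\{n/(2M), 1/(2L)\}$ by hypothesis, so Lemma~\ref{lem_convex_02} is applicable at every epoch.

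Next I would establish the key monotonicity $\frac{1}{\eta_t} + C_1 \eta_t^2 \leq \frac{1}{\eta_{t-1}}$. Multiplying through by $\eta_t$ and using $\eta_t = K \eta_{t-1}$, this reduces to $1 + C_1 \eta_t^3 \leq K = 1 + C_1 D^3 \varepsilon^{3/2}$, i.e.\ $\eta_t \leq D\sqrt{\varepsilon}$; this is clear since $\eta_t \leq \eta_T \leq D\sqrt{\varepsilon}/K \leq D\sqrt{\varepsilon}$ because $K \geq 1$. Substituting back into \eqref{eq_thm_xxx_0001} converts that inequality into the telescope-ready form
\begin{equation*}
F(\tilde{w}_{t-1}) - F_* \leq \frac{1}{C_3 \eta_{t-1}} \| \tilde{w}_{t-1} - w_* \|^2 - \frac{1}{C_3 \eta_t} \| \tilde{w}_t - w_* \|^2 + \frac{C_2}{C_3} \sigma_*^2.
\end{equation*}

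Summing over $t = 1,\dots,T$ telescopes the distance terms. Dropping the nonpositive tail $-\frac{1}{C_3 \eta_T}\|\tilde{w}_T - w_*\|^2$ and dividing by $T$ gives $\frac{1}{T}\sum_{t=1}^{T}[F(\tilde{w}_{t-1}) - F_*] \leq \frac{1}{C_3 T \eta_0}\|\tilde{w}_0 - w_*\|^2 + \frac{C_2}{C_3}\sigma_*^2$. Finally, substituting $\frac{1}{T\eta_0} = \frac{\varepsilon^{3/2}}{\lambda} \cdot \frac{K \exp(\lambda C_1 D^3)}{D\sqrt{\varepsilon}} = \frac{K \exp(\lambda C_1 D^3)}{D\lambda}\varepsilon$ delivers the claimed bound \eqref{eq_nonconvex_01}. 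The only slightly delicate point is the $(1+x)^{1/x} \leq e$ estimate that guarantees the geometric blow-up $K^T$ is neutralized exactly by the $\exp(-\lambda C_1 D^3)$ factor baked into $\eta_0$; aside from that, the proof is routine telescoping over a geometrically increasing step-size schedule, in the spirit of schedules commonly used for SGD under PL-type conditions.
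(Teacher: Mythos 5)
Your proposal is correct and follows essentially the same route as the paper's proof: bound $K^T$ via $(1+x)^{1/x}\leq e$ to keep $\eta_t \leq D\sqrt{\varepsilon}/K$ within the admissible range, verify $\frac{1}{\eta_t}+C_1\eta_t^2\leq\frac{1}{\eta_{t-1}}$ (your reduction to $\eta_t\leq D\sqrt{\varepsilon}$ after multiplying by $\eta_t$ is an equivalent, slightly cleaner phrasing of the paper's computation), and then telescope and divide by $T$. No gaps.
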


Our analysis holds for arbitrarily constant values of the parameters $\gamma,\lambda$ and $D$. In addition, we show our current analysis for every shuffling scheme. An interesting research question arises: whether the convergence results can be improved if one chooses to analyze a randomized shuffling scheme in this framework. However, we leave that question to future works. 

Using Assumption~\ref{ass_small_variance}, we can show the total complexity of Algorithm~\ref{sgd_shuffling} for our setting. 

\begin{cor}\label{cor_thm_nonconvex_01}
Suppose that the conditions in Theorem~\ref{thm_nonconvex_01} and Assumption~\ref{ass_small_variance} hold. Choose $C_1 D \lambda = 1$ and $\varepsilon = \hat{\varepsilon}/G$ such that $0 < \hat{\varepsilon} \leq G$ with the constants 
\begin{align*}
    G = \frac{2 C_1 D^2 e}{C_3} \| \tilde{w}_{0} - w_* \|^2 + \frac{C_2 P}{C_3},  \text{ where }\\
    \begin{cases}
    C_1 = \frac{8 L^2}{3 } + \frac{14 N L^2 }{M} + \frac{4 L^2}{3M}, \\
    C_2 = \frac{2}{M} + 1 + \frac{5}{6 L^2 } + \frac{8 N }{3 M L^2}  + \frac{ 5 }{12ML}, \\
    C_3 = \frac{1}{L^2+1} \frac{ \mu}{ M} . 
    \end{cases}
\end{align*}

Then, the we need $T = \frac{\lambda G^{3/2}}{\hat{\varepsilon}^{3/2}} $ epochs to guarantee 
\begin{align*}
    \min_{1 \leq t \leq T} [ F ( \tilde{w}_{t-1} ) - F_* ] \leq \frac{1}{T} \sum_{t=1}^{T} [ F ( \tilde{w}_{t-1} ) - F_* ] \leq \hat{\varepsilon}. 
\end{align*}

\end{cor}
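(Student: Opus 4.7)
The plan is to derive Corollary~\ref{cor_thm_nonconvex_01} as a direct consequence of Theorem~\ref{thm_nonconvex_01}. First I would take the running-average bound in \eqref{eq_nonconvex_01} and apply Assumption~\ref{ass_small_variance} to replace $\sigma_*^2$ by $P\varepsilon$. The right-hand side then takes the form $[A(\varepsilon)\|\tilde{w}_0 - w_*\|^2 + C_2 P/C_3]\,\varepsilon$, where the prefactor $A(\varepsilon) = K\exp(\lambda C_1 D^3)/(C_3 D \lambda)$ depends on $\varepsilon$ only through $K = 1 + C_1 D^3 \varepsilon^{3/2}$.

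Next I would impose the calibration $C_1 D \lambda = 1$, equivalently $\lambda = 1/(C_1 D)$. This collapses $C_3 D\lambda$ to $C_3/C_1$ and turns $\exp(\lambda C_1 D^3)$ into an explicit constant in $D$. It also fixes the free parameter $\gamma$ inside the definitions \eqref{define_constant_01}, yielding the closed forms of $C_1, C_2, C_3$ stated in the corollary. Restricting attention to $\varepsilon$ small enough that $K\leq 2$, the bracketed prefactor is bounded by the constant $G$ defined in the statement, so the running-average bound reads $(1/T)\sum_{t=1}^T [F(\tilde{w}_{t-1}) - F_*] \leq G\,\varepsilon$.

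The last step is calibrating $\varepsilon$ against the target accuracy. Setting $\varepsilon = \hat{\varepsilon}/G$ makes the right-hand side at most $G\cdot\varepsilon = \hat{\varepsilon}$, and since $T = \lambda/\varepsilon^{3/2}$ in Theorem~\ref{thm_nonconvex_01}, this translates into $T = \lambda G^{3/2}/\hat{\varepsilon}^{3/2}$ epochs. The inequality $\min_{1\leq t\leq T}[F(\tilde{w}_{t-1}) - F_*] \leq (1/T)\sum_{t=1}^T [F(\tilde{w}_{t-1}) - F_*]$ then delivers the claim.

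The main obstacle I expect is checking the compatibility of this calibration with the learning-rate admissibility condition $D\sqrt{\varepsilon}/K \leq \min\{n/(2M),1/(2L)\}$ required by Theorem~\ref{thm_nonconvex_01}, and simultaneously with the auxiliary requirement $C_1 D^3 \varepsilon^{3/2} \leq 1$ needed to ensure $K\leq 2$. Both conditions are of the form $\varepsilon \leq c$ for constants $c$ built from $L, M, N, \mu, D$, so they can be absorbed either into the definition of $G$ or into the stipulated range $0 < \hat{\varepsilon}\leq G$; the hypothesis $\hat{\varepsilon} \leq G$ in the corollary is precisely what makes this absorption painless.
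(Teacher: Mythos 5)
Your proposal follows essentially the same route as the paper's proof: substitute Assumption~\ref{ass_small_variance} into \eqref{eq_nonconvex_01}, bound $K \leq 2$, absorb the prefactor into $G$, set $\varepsilon = \hat{\varepsilon}/G$ so that $T = \lambda/\varepsilon^{3/2} = \lambda G^{3/2}/\hat{\varepsilon}^{3/2}$, and finish by bounding the minimum by the average. Two small corrections: the closed forms of $C_1, C_2, C_3$ come from the separate choice $\gamma = 1/L^2$ in \eqref{define_constant_01}, not from the calibration of $\lambda$; and the normalization the paper actually uses is $C_1 D^3 \lambda = 1$ (so that $\exp(\lambda C_1 D^3) = e$ and $K = 1 + C_1 D^3\varepsilon^{3/2} = 1 + 1/T \leq 2$ holds automatically rather than as an extra restriction on $\varepsilon$), which is what yields the factor $D^2 e$ in the stated $G$ --- under your literal reading $C_1 D\lambda = 1$ the prefactor would instead come out as $2 C_1 \exp(D^2)/C_3$.
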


\renewcommand{\arraystretch}{1.6}
\begin{table*}
\caption{Comparisons of computational complexity (the number of individual gradient evaluations) needed by SGD algorithm to reach an $\hat{\varepsilon}$-accurate solution $w$ that satisfies $F(w) - F(w_*) \leq \hat{\varepsilon}$ (or $\|\nabla F(w)\|^2 \leq \hat{\varepsilon}$ in the non-convex case). 
}\label{table_1}
\begin{center}
\begin{tabular}{ |m{6.5em}|m{12em}|l|m{4em}|m{4em}| } 
 \hline
 \textbf{Settings} & \textbf{References} & \textbf{Complexity} & \textbf{Shuffling Schemes} & \textbf{Global Solution} \\ 
 \hline
 
 \multirow{ 2}{*}{Convex} & \cite{Nemirovski2009,pmlr-v28-shamir13} \textcolor{red}{$^{(1)}$} &
 {$\Ocal\left(\frac{\Delta_0^2 + G^2}{\hat{\varepsilon}^2}\right)$} & \ding{55} & \ding{51}
  \\ 
  \cline{2-5}
     &\cite{mishchenko2020random,nguyen2020unified} \textcolor{red}{$^{(2)}$} &{$\Ocal\left( \frac{n}{\hat{\varepsilon}^{3/2}}\right)$ } & \ding{51} & \ding{51}
 \\ 
  \hline
 {PL condition} 
     &\cite{nguyen2020unified}  &{$\tilde{\Ocal}\left( \frac{n\sigma^2 }{\hat{\varepsilon}^{1/2}}\right)$ } & \ding{51} & \ding{51}
 \\ 
 
 \hline
 Star-convex ~~~~related & \cite{gower2021sgd} \textcolor{red}{$^{(3)}$} &
 {$\Ocal\left(\frac{1 }{\hat{\varepsilon}^2}\right)$} & \ding{55} & \ding{51}
  \\ 
 \hline
 
  \multirow{ 2}{*}{Non-convex} & \cite{ghadimi2013stochastic} \textcolor{red}{$^{(5)}$} &
 {$\Ocal\left(\frac{\sigma^2}{\hat{\varepsilon}^2}\right)$} & \ding{55} & \ding{55}
  \\ 
  \cline{2-5}
     &\cite{nguyen2020unified,mishchenko2020random} \textcolor{red}{$^{(5)}$} &{$\Ocal\left( \frac{n\sigma }{\hat{\varepsilon}^{3/2}}\right)$ } & \ding{51} & \ding{55}
 \\ 
 \hline
 \textbf{\textcolor{blue}{\textit{Our setting (non-convex)}}} &  \textbf{\textcolor{blue}{This paper, Corollary~\ref{cor_thm_nonconvex_01}}}\textcolor{red}{$^{(4)}$} &
 \textcolor{blue}{$\Ocal\left( \frac{ n(N\vee 1)^{3/2}}{\hat{\varepsilon}^{3/2} }\right)$ }  &  \textcolor{blue}{\textbf{\ding{51}}} &  \textcolor{blue}{\textbf{\ding{51}}}
  \\ 
 \hline
\end{tabular}
\end{center}
\footnotesize{
\textcolor{red}{$^{(0)}$} We note that the assumptions in this table are not comparable and we only show the roughly complexity in terms of $\hat{\varepsilon}$. In addition, to make fair comparisons, we only report the complexity of unified shuffling schemes.\\
\textcolor{red}{$^{(1)}$} Standard results for SGD in convex literature often use a different set of assumptions from the one in this paper (e.g. bounded domain that $\| w -  w_{*} \|^2 \leq \Delta_0$ for each iterate $w$ and/or  bounded gradient that $\E[\|\nabla f(w;i)\|] \leq G^2 $). We report the corresponding complexity for a rough comparison. \\
\textcolor{red}{$^{(2)}$} \citep{mishchenko2020random} shows a bound for Incremental Gradient while  \citep{nguyen2020unified} has a unified setting. We translate these results for unified shuffling schemes from these references to the convex setting. \\
\textcolor{red}{$^{(3)}$} Since we cannot find a reference containing the convergence rate for vanilla SGD and star-convex functions, we adapt the reference \cite{gower2021sgd} here. This paper shows a result for $L$-smooth and quasar convex function with an additional Expected Residual (ER) assumption, which is weaker than assuming smoothness for $f(\cdot; i)$ and interpolation property. The star-convex results hold when the quasar-convex parameter is 1. \\ 
\textcolor{red}{$^{(4)}$} Since we use a different set of assumptions than the other references, we only report the rough comparison in $n, N$ and $\hat{\varepsilon}$, where $N$ is the constant from Assumption \ref{ass_new_property_fi_alg} and $N \vee 1 = \max (N, 1)$. Note that $N=0$ in the framework of star-smooth-convex function. In addition, we need $\sigma_{*}^2 = 0$ so that the complexity holds with arbitrary $\hat{\varepsilon}$.
We explain the  detailed complexity below and in the Appendix.\\ 
\textcolor{red}{$^{(5)}$} Standard  literature for SGD in non-convex setting assumes a bounded variance that $\mathbb{E}_i\big[\norms{ f(w; i) - \nabla{F}(w) }^2\big] \leq \sigma^2$, we report the rough comparison. 
} 
\end{table*}

\newpage
\textbf{\textit {Computational complexity.}}
Our global convergence result in this Corollary holds for a fixed value of $\hat{\varepsilon}$ in Assumption \ref{ass_small_variance}. Thus, when $\varepsilon \to 0$, this assumption is equivalent to assuming $\sigma_*^2 = 0$. The total complexity of Corollary \ref{cor_thm_nonconvex_01} is $\mathcal{O} \left( \frac{n}{\hat{\varepsilon}^{3/2}} \right)$. This rate matches the best known rate for unified sampling schemes for SGD in convex setting \citep{mishchenko2020random, nguyen2020unified}. However, our result holds for a broader class of functions that are possibly non-convex. Comparing to the non-convex setting, 
current literature \citep{mishchenko2020random, nguyen2020unified} also matches our rate to the order of $\hat{\varepsilon}$, however, we can only prove that SGD converges to a stationary point with a weaker criteria $\|\nabla F(w)\|^2 \leq \hat{\varepsilon}$ for general non-convex funtions. Table \ref{table_1} shows these comparisons in various settings. Note that when using a randomized shuffling scheme, SGD often performs a better rate in terms of the data $n$ in various settings with and without (strongly) convexity. For example, in strongly convex and/or PL setting, the convergence rate of RR is $\tilde{\Ocal}( {\sqrt{n} }/\sqrt{\hat{\varepsilon}})$ , which is better than unified schemes with $\tilde{\Ocal}( {n }/\sqrt{\hat{\varepsilon}})$ \citep{2020Ahn}. However, for a fair comparison, we do not report these results in Table \ref{table_1} as our theoretical analysis is derived for unified shuffling scheme.

If we further assume that $L,M,N > 1$, the detailed complexity with respect to these constants is 
$$\mathcal{O} \left(\frac{L^4  (M+N)^{3/2} } { \mu^{3/2}} \cdot \frac{n}{\hat{ \varepsilon}^{3/2}} \right).$$

We present all the detailed proofs in the Appendix. 
Our theoretical framework is new and adapted to the finite-sum minimization problem. Moreover, it utilizes the choice of shuffling sample schemes to show a better complexity in terms of $\hat{\varepsilon}$ than the complexity of vanilla i.i.d. sampling scheme. 



\section{Numerical Experiments}\label{sec_experiment}
In this section, we show some experiments for shuffling-type SGD algorithms to demonstrate our theoretical results of convergence to a global solution.
Following the setting of Theorem \ref{thm_PL_fi}, we consider the non-convex regression problem with squared loss function. We choose fully connected neural networks 
in our implementation. 
We experiment with different regression datasets: the Diabetes dataset from sklearn library  \citep{diabetes,scikit-learn} with 442 samples in dimension 10; the Life expectancy dataset from WHO \citep{expectancy} with 1649 trainable data points and 19 features. In addition, we test with the California Housing data from StatLib repository \citep{california,scikit-learn} with a training set of 16514 samples and 8 features. 

For the small Diabetes dataset, we use the classic LeNet-300-100 model \citep{MNIST}. For other larger datasets, we use similar fully connected neural networks with an additional starting layer of 900 neurons. 
We apply the randomized reshuffling scheme using PyTorch framework \citep{pytorch}.
This shuffling scheme is the common heuristic in training neural networks and is implemented in many deep learning platforms (e.g. TensorFlow, PyTorch, and Keras \citep{tensorflow2015-whitepaper, pytorch, chollet2015keras}). 


For each dataset $\{x_i, y_i\}$, we preprocess and modify the initial data lightly to guarantee the over-parameterized  setting in our experiment i.e. in order to make sure that there exists a solution that interpolates the data.  We do this by using a pre-training stage: firstly we use GD/SGD algorithm to find a weight $w$ that yields a sufficiently small value for the loss function (for Diabetes dataset we train to $10^{-8}$ and for other datasets we train to $10^{-2}$). Letting the input data $x_i$ be fixed, we change the label data to $\hat{y}_i$ such that the weight $w$ yields a small loss function $\Ocal(\epsilon)$ for the optimization associated with data $\{x_i, \hat{y}_i\} $, and the distance between $\hat{y}_i$ and $y_i$ is small. Then the modified data is ready for the next stage. We summarize the data (after modification) in our experiments in Table \ref{tab_2}. 
\begin{table}[H]
    \centering
    \caption{Datasets used in our experiments}
    \label{tab_2}
    \begin{tabular}{|l|l|l|l|l|}
    \hline
    Data name & \# Samples & \# Features & Networks layers & Sources\\
    \hline
    Diabetes & 442& 10 & 300-100 & \cite{diabetes} \\\hline
    Life Expectancy & 1649 & 19& 900-300-100& \cite{expectancy}\\\hline
    California Housing& 16514 & 8& 900-300-100& \cite{california} \\\hline
    \end{tabular}
\end{table}
For each dataset, we first tune the step size using a coarse grid search $[0.0001, 0.001, 0.01, 0.1, 1]$ for 100 epochs. Then, for example, if 0.01 performs the best, we test the second grid search $[0.002, 0.005, 0.01, 0.02, 0.05]$ for 5000 epochs. Finally, we progress to the training stage with $10^6$ epochs and repeat that experiment for 5 random seeds. We report the average results with confidence intervals in Figure \ref{fig_1}.

\begin{figure}[H]
    \centering
    \includegraphics[width= 0.32\textwidth]{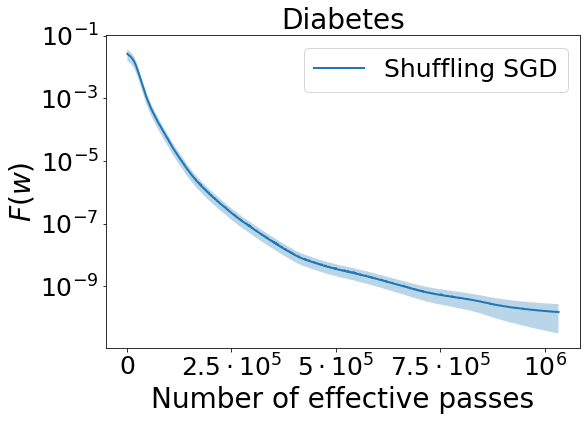}
    \includegraphics[width= 0.32\textwidth]{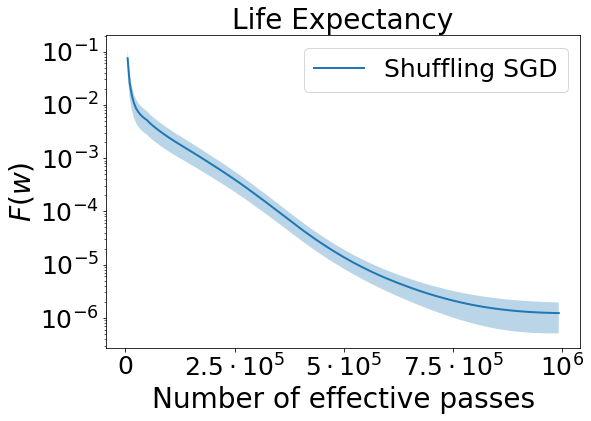}
    \includegraphics[width= 0.32\textwidth]{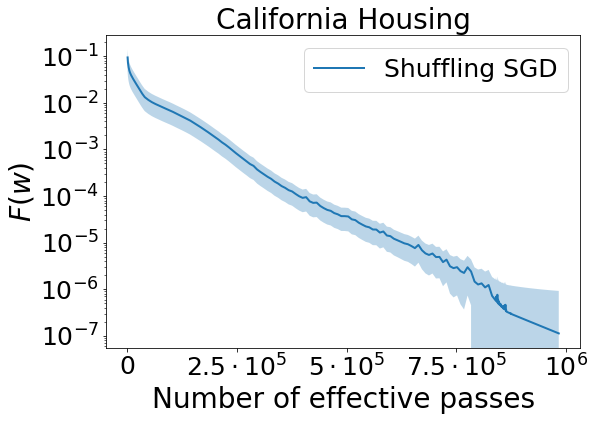}
    \caption{The train loss produced by Shuffling SGD algorithm for three datasets: Diabetes, Life Expectancy and California Housing. }
    \label{fig_1}
\end{figure}

For California Housing data, Shuffling SGD fluctuates toward the end of the training process. Nevertheless, for all three  datasets it converges steadily to a small value of loss function. 
In summary, this experiment confirms our theoretical guarantee that demonstrates a convergence to global solution for shuffling-type SGD algorithm in neural network settings.







\section{Conclusion and Future Research}\label{sec_conclusion}

In this paper, our focus is on investigating the global convergence properties of shuffling-type SGD methods. We consider a more relaxed set of assumptions in the framework of star-smooth-convex functions. We demonstrate that our approach achieves a total complexity of $\mathcal{O}(\frac{n}{\hat{\varepsilon}^{3/2}})$ to attain an $\hat{\varepsilon}$-accurate global solution. Notably, this result aligns with the previous computational complexity of unified shuffling methods in non-convex settings, while ensuring that the algorithm converges to a global solution. Our theoretical framework revolves around the shuffling sample schemes for finite-sum minimization problems in machine learning.

We also provide insights into the relationships between our framework and well-established over-parameterized settings, as well as the existing literature on the star-convexity class of functions. Furthermore, we establish connections with neural network architectures and explore how these learning models align with our theoretical optimization frameworks.

This paper prompts several intriguing research questions, including practical network designs and more relaxed theoretical settings that can support the global convergence of shuffling SGD methods. Additionally, extending the global convergence framework to other stochastic gradient methods \citep{AdaGrad,KingmaB14} and variance reduction methods \citep{SAG,SAGA,SVRG,Nguyen2017sarah}, all with shuffling sampling schemes, as well as the exploration of momentum shuffling methods \citep{smg_tran21b,Tran2022_ShufflingNesterov}, represents a promising direction.

An interesting research question that arises is whether the convergence results can be further enhanced by exploring the potential of a randomized shuffling scheme within this framework \citep{mishchenko2020random}. However, we leave this question for future research endeavors.

\newpage

\bibliography{reference}
\bibliographystyle{plainnat}

\newpage

  \vbox{%
    \hsize\textwidth
    \linewidth\hsize
    \vskip 0.1in
  \hrule height 4pt
  \vskip 0.25in
  \vskip -5.5pt%
  \centering
    {\LARGE\bf{On the Convergence to a Global Solution \\ of Shuffling-Type Gradient Algorithms \\
    Supplementary Material, NeurIPS 2023} \par}
      \vskip 0.29in
  \vskip -5.5pt
  \hrule height 1pt
  \vskip 0.09in%
    
  \vskip 0.2in
  }
  
\appendix

\section{Theoretical settings: Proof of Theorem~\ref{thm_PL_fi}}
\subsection{Proof of Theorem~\ref{thm_PL_fi}}
\begin{proof}
Let us use the notation $f(w ; i) = \phi_i ( h ( w  ; i) ) = \frac{1}{2} \| h (w ; i) - y^{(i)} \|^2$.
We consider an architecture $h (w ; i)$ with  $w$ be the vectorized weight and $h$ consists of a final bias layer $b$: 
\begin{align*}
    h(w;i) & = W^{\top} z(\theta; i) + b \in \R^c, 
\end{align*}
where $w = \textbf{\text{vec}}(\{\theta, W, b\})$ and $z(\theta;i)$ are some inner architecture, which can be chosen arbitrarily.

Firstly, we compute the gradient of $f(\cdot ; i)$ with respect to $b \in \mathbb{R}^c$. For $j = 1,\dots,c$, we have
\begin{align*}
    \frac{\partial f (w; i)}{\partial b_j} 
    = \frac{\partial \phi_i (h (w; i))}{\partial b_j} 
    = \sum_{k=1}^c \frac{\partial h(w ; i)_k}{\partial b_j} \cdot \frac{\partial \phi_i(x)}{\partial x_k} \Big |_{x = h (w ; i)}
    &= \frac{\partial \phi_i(x)}{\partial x_j} \Big |_{x = h(w; i)}, \ i = 1,\dots,n. \tagthis \label{eq_last_bias_f_phi}
\end{align*}
The last equality follows since $\frac{\partial h(w ; i)_k}{\partial b_j} = 0$ for every $k \neq j$ and $\frac{\partial h(w ; i)_k}{\partial b_j}=1 $ for $k = j$. In other words, it is the identity matrix.

Let us denote that $f_i^* = \min_{w} f ( w; i )$ and $\phi_i^* = \min_{x} \phi_i (x)$. 
We prove the following statement for $\mu = 1$:
\begin{align*}
    \| \nabla_w f (w ;i) \|^2  
    \geq \| \nabla_x \phi_i (x) |_{x = h (w; i)} \|^2 
    \geq 2\mu [ \phi_i (h ( w ; i ) ) - \phi_i^*] 
    \geq 2\mu [ f (w; i) - f_i^* ],
\end{align*}
for every $w \in \R^d,$ and $i = 1, \dots, n.$

We begin with the first inequality:
\begin{align*}
    \| \nabla_w f (w ;i) \|^2 
    &= \sum_{j=1}^d \Big( \frac{\partial f(w; i)}{\partial w_j} \Big)^2 \geq \sum_{j=d-c+1}^d \Big( \frac{\partial f(w; i)}{\partial w_j} \Big)^2 
    = \sum_{j=1}^c \Big( \frac{\partial f(w; i)}{\partial b_j} \Big)^2 \\
    & \overset{\eqref{eq_last_bias_f_phi}}{=} \sum_{j=1}^c \Big( \frac{\partial \phi_i(x)}{\partial x_j} \Big|_{x= h(w; i)} \Big)^2
    = \| \nabla_x \phi_i (x) |_{x = h (w; i)} \|^2 .
\end{align*}
Now let us prove the PL condition for each function $\phi_i (x)$, i.e., there exists a constant $\mu > 0$ such that:
\begin{align*}
    \| \nabla_x \phi_i (x) \|^2 \geq 2\mu [ \phi_i (x ) - \phi_i^*] \ \forall x \in \R^c, \ i = 1, \dots, n.
\end{align*}
Recall the squared loss that $\phi_i (x) = \frac{1}{2} \| x - y^{(i)}\|^2$ and $\nabla_x \phi_i (x) = x - y^{(i)}$. 
We can see that the constant $\mu = 1$ satisfies the following inequality for every $x \in \R^c, \ i = 1, \dots, n$:
\begin{align*}
    \| \nabla_x \phi_i (x) \|^2 
    = \| x - y^{(i)} \|^2 
    = 2 \frac{1}{2} \| x - y^{(i)} \|^2
    = 2 \mu \phi_i (x)
    \geq 2\mu [ \phi_i (x ) - \phi_i^*],
\end{align*}
where the last inequality follows since $\phi_i^* \geq 0$.\\
The PL condition for $\phi_i$ directly implies the second inequality. The last inequality follows from the facts that $f (w; i) = \phi_i (h (w; i))$ and $f_i^* = \min_w f_i \geq \min_x \phi_i(x) = \phi_i^*$. Hence,  Theorem~\ref{thm_PL_fi} is proved.
\end{proof}

\section{Preliminary results for SGD Shuffling Algorithm}

In this section, we present the preliminary results for Algorithm~\ref{sgd_shuffling}. Firstly, from the choice of learning rate $\eta_i^{(t)} := \frac{\eta_t}{n}$ and the update $w_{i+1}^{(t)} := w_{i}^{(t)} - \eta_i^{(t)} \nabla f ( w_i^{(t)} ; \pi^{(t)} ( i + 1 ) )$  in Algorithm~\ref{sgd_shuffling},  for $i \in [n]$, we have
\begin{align*}
    w_{i}^{(t)} = w_{i-1}^{(t)} - \frac{\eta_t}{n} \nabla f ( w_{i-1}^{(t)} ; \pi^{(t)} (i)) = w_{0}^{(t)} - \frac{\eta_t}{n} \sum_{j=0}^{i-1} \nabla f ( w_{j}^{(t)} ; \pi^{(t)} (j + 1)). \tagthis \label{update_shuffling_01}
\end{align*}
Hence, 
\begin{align*}
    w_{0}^{(t+1)} = w_{n}^{(t)} = w_{0}^{(t)} - \frac{\eta_t}{n} \sum_{j=0}^{n-1} \nabla f ( w_{j}^{(t)} ; \pi^{(t)} (j + 1)). \tagthis \label{update_shuffling_02}
\end{align*}

Next, we refer to a Lemma in \citep{nguyen2020unified} to bound the updates of shuffling SGD algorithms. 
\begin{lem}[Lemma 5 in \cite{nguyen2020unified}]\label{lem_basics}

Suppose that Assumption \ref{ass_Lsmooth_fi} holds for \eqref{ERM_problem_01}. 
Let $\sets{w_i^{(t)}}$ be generated by  Algorithm~\ref{sgd_shuffling} with the learning rate $\eta_i^{(t)} := \frac{\eta_t}{n} > 0$ for a given positive sequence $\sets{\eta_t}$. If $0 < \eta_t \leq \frac{1}{2L}$ for all $t \geq 1$, we have 
\begin{align*}
    \frac{1}{n}\sum_{j=0}^{n-1}\norms{w_j^{(t)} - w_{*}}^2 &\leq  4\norms{w_0^{(t)} - w_{*}}^2 + 8 \sigma_{*}^2 \cdot \eta_t^2, \tagthis \label{eq_thm_weight_02} 
    \\
    \frac{1}{n} \sum_{j=0}^{n-1} \norms{ w_{j}^{(t)} - w_{0}^{(t)} }^2 & \leq \eta_t^2 \cdot \frac{8 L^2}{3} \norms{ w_{0}^{(t)} - w_{*} }^2 \ + \ \frac{16 L^2 \sigma_{*}^2}{3} \cdot \eta_t^4  \ + \ 2\sigma_{*}^2 \cdot \eta_t^2. \tagthis \label{eq_thm_weight_01}
\end{align*}
\end{lem}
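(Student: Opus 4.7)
The plan is to exploit the explicit telescoping form of the shuffling-SGD update given in \eqref{update_shuffling_01}, namely
\[
w_i^{(t)} - w_0^{(t)} \;=\; -\frac{\eta_t}{n}\sum_{j=0}^{i-1}\nabla f\bigl(w_j^{(t)};\pi^{(t)}(j+1)\bigr),
\]
and reduce both bounds to controlling the single quantity $\sum_{j=0}^{n-1}\|\nabla f(w_j^{(t)};\pi^{(t)}(j+1))\|^2$ via $L$-smoothness. Observe that because $\pi^{(t)}$ is a permutation of $[n]$, we have the useful identity
\[
\frac{1}{n}\sum_{j=0}^{n-1}\bigl\|\nabla f(w_*;\pi^{(t)}(j+1))\bigr\|^2 \;=\; \frac{1}{n}\sum_{i=1}^{n}\|\nabla f(w_*;i)\|^2 \;\le\; \sigma_*^2,
\]
which will play the role of the ``noise'' term on the right-hand side of both inequalities.

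\textbf{Step 1 (first bound).} Subtract $w_*$ from the telescoping identity and apply Young's inequality $\|a+b\|^2 \le (1+\alpha)\|a\|^2 + (1+1/\alpha)\|b\|^2$ with a well-chosen $\alpha$ (I expect $\alpha = 1$ or $\alpha = 3$), followed by Cauchy--Schwarz $\|\sum_{j=0}^{i-1} g_j\|^2 \le i\sum_{j=0}^{i-1}\|g_j\|^2$, and then the smoothness split
\[
\|\nabla f(w_j^{(t)};\pi^{(t)}(j+1))\|^2 \le 2L^2\|w_j^{(t)}-w_*\|^2 + 2\|\nabla f(w_*;\pi^{(t)}(j+1))\|^2.
\]
Averaging over $i\in\{0,\dots,n-1\}$ and denoting $S_t := \frac{1}{n}\sum_{j=0}^{n-1}\|w_j^{(t)}-w_*\|^2$ produces a self-referential inequality of the form
\[
S_t \;\le\; c_0\|w_0^{(t)}-w_*\|^2 + c_1\eta_t^2 L^2 S_t + c_2\eta_t^2\sigma_*^2,
\]
after using $\sum_{i=0}^{n-1} i \le \tfrac{n^2}{2}$. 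The hypothesis $\eta_t\le \tfrac{1}{2L}$ forces $c_1\eta_t^2 L^2 \le \tfrac{1}{2}$, so absorbing the $S_t$ term into the left-hand side yields the claimed bound $S_t \le 4\|w_0^{(t)}-w_*\|^2 + 8\sigma_*^2\eta_t^2$ once the Young parameter is tuned.

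\textbf{Step 2 (second bound).} Return to the telescoping identity, this time applied directly to $\|w_i^{(t)}-w_0^{(t)}\|^2$ without any Young inequality. Cauchy--Schwarz and the same smoothness split give
\[
\|w_i^{(t)}-w_0^{(t)}\|^2 \;\le\; \frac{2\eta_t^2 i}{n^2}\sum_{j=0}^{i-1}\Bigl(L^2\|w_j^{(t)}-w_*\|^2 + \|\nabla f(w_*;\pi^{(t)}(j+1))\|^2\Bigr).
\]
Summing over $i$, swapping the order of summation and bounding $\sum_{i=j+1}^{n-1} i$, then substituting the estimate for $S_t$ from Step~1, delivers an inequality of the exact form
\[
\frac{1}{n}\sum_{i=0}^{n-1}\|w_i^{(t)}-w_0^{(t)}\|^2 \;\le\; \tfrac{8L^2}{3}\eta_t^2\|w_0^{(t)}-w_*\|^2 + \tfrac{16L^2}{3}\eta_t^4\sigma_*^2 + 2\eta_t^2\sigma_*^2,
\]
up to constants matching the statement.

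\textbf{Main obstacle.} The strategy itself is routine once one sees that both conclusions follow from the same gradient-sum bound combined with smoothness. The genuine work lies in matching the precise numerical constants $4$, $8$, $8/3$, $16/3$, $2$ in the statement: this requires (i) the right choice of Young parameter in Step~1, (ii) replacing the loose $\sum_{i=0}^{n-1} i \le n^2/2$ by the exact $n(n-1)/2$ (and similarly $\sum i^2 = n(n-1)(2n-1)/6$ if the sharper $1/3$ factor is needed for the $8/3$ coefficient), and (iii) carefully tracking how the $\eta_t^2\sigma_*^2$ terms aggregate when the first bound is reinjected into the second. None of this is conceptually delicate, but the bookkeeping is where a careless factor of two can appear.
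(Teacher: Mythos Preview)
The paper does not prove this lemma at all: it is quoted verbatim as ``Lemma~5 in \cite{nguyen2020unified}'' in the preliminary-results section of the appendix and then used as a black box in the proofs of Lemma~\ref{lem_convex_01} and Lemma~\ref{lem_convex_02}. So there is no in-paper argument against which to benchmark your proposal.

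That said, your sketch is the standard route and is structurally sound. Carrying out your Step~1 with Young parameter $\alpha=1$ and the bound $\sum_{i=j+1}^{n-1} i \le n^2/2$ gives
\[
S_t \le 2\|w_0^{(t)}-w_*\|^2 + 2\eta_t^2 L^2 S_t + 2\eta_t^2\sigma_*^2,
\]
and absorbing via $\eta_t\le 1/(2L)$ yields $S_t \le 4\|w_0^{(t)}-w_*\|^2 + 4\eta_t^2\sigma_*^2$, which already beats the stated constant $8$ in \eqref{eq_thm_weight_02}. Your Step~2 then produces $\frac{1}{n}\sum_i\|w_i^{(t)}-w_0^{(t)}\|^2 \le \eta_t^2 L^2 S_t + \eta_t^2\sigma_*^2$, which after reinjecting $S_t$ gives a leading coefficient of $4\eta_t^2 L^2$ rather than the $\tfrac{8}{3}\eta_t^2 L^2$ in \eqref{eq_thm_weight_01}. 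The sharper $8/3$ in the cited reference presumably comes from a finer combinatorial estimate (of the $\sum i^2 \sim n^3/3$ flavour you anticipated), but your looser constant is harmless here: in Lemmas~\ref{lem_convex_01}--\ref{lem_convex_02} these numbers only enter the definitions of $B_1,B_2,C_1,C_2$, and the eventual $\mathcal{O}(n/\hat\varepsilon^{3/2})$ complexity in Corollary~\ref{cor_thm_nonconvex_01} is unaffected. Your own ``main obstacle'' diagnosis is therefore accurate --- the only work is bookkeeping, and even imperfect bookkeeping suffices for everything this paper needs from the lemma.
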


Now considering the term $\norms{ w_{n}^{(t)} - w_{0}^{(t)} }^2$, we get that 
\allowdisplaybreaks
\begin{align*}
    \norms{ w_{n}^{(t)} - w_{0}^{(t)} }^2 
    & \overset{\eqref{update_shuffling_02}}{\leq} \frac{\eta_t^2}{n} \left\| \frac{1}{n} \sum_{j=0}^{n-1} \nabla f ( w_{j}^{(t)} ; \pi^{(t)} (j + 1)) \right\|^2 \\
    & = \frac{\eta_t^2}{n} \left\| \frac{1}{n} \sum_{j=0}^{n-1} ( \nabla f ( w_{j}^{(t)} ; \pi^{(t)} (j + 1)) - \nabla f ( w_{*} ; \pi^{(t)} (j + 1)) ) \right\|^2 \\
    & \leq \frac{\eta_t^2}{n} \frac{1}{n} \sum_{j=0}^{n-1}  \left\|  \nabla f ( w_{j}^{(t)} ; \pi^{(t)} (j + 1)) - \nabla f ( w_{*} ; \pi^{(t)} (j + 1))  \right\|^2 \\
    & \overset{\eqref{eq:Lsmooth_basic}}{\leq}  \frac{L^2 \eta_t^2}{n} \frac{1}{n} \sum_{j=0}^{n-1}  \|  w_{j}^{(t)}  -  w_{*}  \|^2 \\
    & \overset{\eqref{eq_thm_weight_02}}{\leq} \frac{4 L^2 \eta_t^2}{n} \norms{w_0^{(t)} - w_{*}}^2  + \frac{8 L^2 \eta_t^4}{n} \sigma_{*}^2.
\end{align*}
We further have 
\begin{align*}
    \frac{1}{n} \sum_{j=0}^{n} \norms{ w_{j}^{(t)} - w_{0}^{(t)} }^2 &= \frac{1}{n} \sum_{j=0}^{n-1} \norms{ w_{j}^{(t)} - w_{0}^{(t)} }^2  +\frac{1}{n} \norms{ w_{n}^{(t)} - w_{0}^{(t)} }^2 \\
    & \leq \eta_t^2 \cdot \frac{8 L^2}{3} \norms{ w_{0}^{(t)} - w_{*} }^2 \ + \ \frac{16 L^2 \sigma_{*}^2}{3} \cdot \eta_t^4  \ + \ 2\sigma_{*}^2 \cdot \eta_t^2
    \\
    & \quad +\frac{4 L^2 \eta_t^2}{n} \norms{w_0^{(t)} - w_{*}}^2  + \frac{8 L^2 \eta_t^4}{n} \sigma_{*}^2. \tagthis \label{eq_thm_weight_04}
\end{align*}

\section{Main results: Proofs of Lemma~\ref{lem_convex_01}, Lemma~\ref{lem_convex_02}, Theorem~\ref{thm_nonconvex_01}, and Corollary~\ref{cor_thm_nonconvex_01}}

\subsection{Proof of Lemma~\ref{lem_convex_01}}

\begin{lem}\label{lem_convex_01}
Let $\{ w^{(t)}_i \}_{t=1}^{T}$ be the sequence generated by Algorithm~\ref{sgd_shuffling} with $\eta_i^{(t)} = \frac{\eta_t}{n}$, with $0 < \eta_t \leq \frac{n}{2M}$ for $\eta_t \leq \frac{1}{2L}$. Then, under Assumptions~\ref{ass_lowerbound_fi}, \ref{ass_Lsmooth_fi}, and \ref{ass_new_property_fi_alg}, we have 
\begin{align*}
    \| w_{0}^{(t+1)} - w_* \|^2 & \leq \left( 1 + B_1 \eta_t^3 \right) \| w_{0}^{(t)} - w_* \|^2 - \frac{ \eta_t}{2M} \frac{1}{n} \sum_{i=1}^{n} \| \nabla f ( w_{i-1}^{(t)} ; \pi^{(t)} (i))  \|^2 + B_2 \eta_t \sigma_{*}^2, \tagthis \label{eq_lem_convex_01}
\end{align*}
where
\begin{align*}
    \begin{cases}
    B_1 = \frac{8 L^2}{3}  + \frac{14 N L^2 }{M}, \\
    B_2 = \frac{2}{M  } + 1+ \frac{5}{6 L^2 } + \frac{8 N }{3 M L^2}.
    \end{cases} \tagthis \label{define_constant_00}
\end{align*}
\end{lem}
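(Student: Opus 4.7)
The plan is to expand $\|w_0^{(t+1)} - w_*\|^2$ via telescoping along the inner iterates of epoch $t$, treat the two resulting pieces with Assumption~\ref{ass_new_property_fi_alg} and a Young's inequality, and then absorb the residual penalty sums $\frac{1}{n}\sum_j \|w_j^{(t)} - w_0^{(t)}\|^2$ using Lemma~\ref{lem_basics}. Denoting $g_i := \nabla f(w_{i-1}^{(t)}; \pi^{(t)}(i))$, the telescoping identity
\begin{align*}
\|w_i^{(t)} - w_*\|^2 - \|w_{i-1}^{(t)} - w_*\|^2 = -\frac{2\eta_t}{n}\langle g_i, w_{i-1}^{(t)} - w_* \rangle + \frac{\eta_t^2}{n^2}\|g_i\|^2,
\end{align*}
summed from $i=1$ to $n$, gives
\begin{align*}
\|w_0^{(t+1)} - w_*\|^2 = \|w_0^{(t)} - w_*\|^2 - \frac{2\eta_t}{n}\sum_{i=1}^n \langle g_i, w_{i-1}^{(t)} - w_* \rangle + \frac{\eta_t^2}{n^2}\sum_{i=1}^n \|g_i\|^2.
\end{align*}
The quadratic term is immediately bounded by $\frac{\eta_t}{2M}\cdot\frac{1}{n}\sum_i \|g_i\|^2$ using $\eta_t \leq \frac{n}{2M}$; this is the only place the constraint $\eta_t \leq n/(2M)$ enters.

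Next, I would exploit that $w_*$ minimizes $F$, so $\sum_i \nabla f(w_*; \pi^{(t)}(i)) = n\nabla F(w_*) = 0$. Writing $\hat g_i := g_i - \nabla f(w_*; \pi^{(t)}(i))$ and using $\sum_i \langle \nabla f(w_*; \pi^{(t)}(i)), w_0^{(t)} - w_* \rangle = 0$, the cross-term decomposes as
\begin{align*}
\sum_i \langle g_i, w_{i-1}^{(t)} - w_* \rangle = \sum_i \langle \hat g_i, w_{i-1}^{(t)} - w_* \rangle + \sum_i \langle \nabla f(w_*; \pi^{(t)}(i)), w_{i-1}^{(t)} - w_0^{(t)} \rangle.
\end{align*}
Assumption~\ref{ass_new_property_fi_alg} applied to the first sum yields $-\frac{2\eta_t}{nM}\sum_i \|\hat g_i\|^2 + \frac{2\eta_t N}{M}\cdot\frac{1}{n}\sum_j \|w_j^{(t)} - w_0^{(t)}\|^2$, and the elementary bound $\|\hat g_i\|^2 \geq \frac{1}{2}\|g_i\|^2 - \|\nabla f(w_*;\pi^{(t)}(i))\|^2$ converts the negative piece into $-\frac{\eta_t}{M}\cdot\frac{1}{n}\sum_i\|g_i\|^2 + \frac{2\eta_t}{M}\sigma_*^2$. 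Combined with the $+\frac{\eta_t}{2M}$ absorbed from the quadratic step, this leaves exactly the advertised $-\frac{\eta_t}{2M}\cdot\frac{1}{n}\sum_i\|g_i\|^2$ and donates $\frac{2}{M}$ to $B_2$. For the residual sum, Young's inequality with parameter $\alpha = 1$ contributes $\eta_t\sigma_*^2$ (the ``$1$'' in $B_2$) together with $\eta_t\cdot\frac{1}{n}\sum_i \|w_{i-1}^{(t)} - w_0^{(t)}\|^2$.

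At this point the only terms left to control are the two penalty averages of $\|w_j^{(t)} - w_0^{(t)}\|^2$, which I would bound using Lemma~\ref{lem_basics}: equation~\eqref{eq_thm_weight_01} controls $\frac{1}{n}\sum_{j=0}^{n-1}$ and equation~\eqref{eq_thm_weight_04} controls $\frac{1}{n}\sum_{j=0}^{n}$. Each bound has the form $O(L^2\eta_t^2)\|w_0^{(t)} - w_*\|^2 + O(\eta_t^2 + L^2\eta_t^4)\sigma_*^2$, so the outer factor of $\eta_t$ promotes the $\|w_0^{(t)}-w_*\|^2$ contributions to order $\eta_t^3$ and leaves $\sigma_*^2$ contributions of orders $\eta_t^3$ and $\eta_t^5$; I would collapse the latter back to $\eta_t\sigma_*^2$ using $\eta_t \leq \frac{1}{2L}$, which implies $L^2\eta_t^2 \leq \frac{1}{4}$ and $\eta_t^3 \leq \frac{\eta_t}{4L^2}$. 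The main obstacle is the careful bookkeeping: checking that after these absorptions the $\|w_0^{(t)}-w_*\|^2$ coefficient sums to at most $B_1 = \frac{8L^2}{3} + \frac{14NL^2}{M}$ (with $\frac{8L^2}{3}$ coming from the Young's penalty on $\sum_{j=0}^{n-1}$ and $\frac{14NL^2}{M}$ dominating the Assumption~\ref{ass_new_property_fi_alg} penalty on $\sum_{j=0}^{n}$), and that the $\sigma_*^2$ coefficient sums to $B_2$, with the higher-order absorptions producing exactly the $\frac{5}{6L^2}$ and $\frac{8N}{3ML^2}$ summands.
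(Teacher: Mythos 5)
Your proposal is correct and follows essentially the same route as the paper's proof: the same per-step expansion of $\|w_i^{(t)}-w_*\|^2$, the same use of Assumption~\ref{ass_new_property_fi_alg} together with $\frac{1}{2}\|a\|^2-\|b\|^2\le\|a-b\|^2$, the same splitting of $\langle \nabla f(w_*;\pi^{(t)}(i)),\,w_{i-1}^{(t)}-w_*\rangle$ through $w_0^{(t)}$ with Young's inequality, and the same invocation of Lemma~\ref{lem_basics} (equations~\eqref{eq_thm_weight_01} and~\eqref{eq_thm_weight_04}) followed by absorption via $\eta_t\le\frac{1}{2L}$. Your constant bookkeeping ($\frac{2}{M}$, $1$, $\frac{5}{6L^2}$, $\frac{8N}{3ML^2}$ for $B_2$; $\frac{8L^2}{3}$ and $\frac{40NL^2}{3M}\le\frac{14NL^2}{M}$ for $B_1$) matches the paper's.
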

\begin{proof}

We start with Assumption~\ref{ass_new_property_fi_alg}. Using the inequality $\frac{1}{2} \| a\|^2 - \|b\|^2 \leq \|a-b\|^2$, we have for $t = 1,\dots,T$ and $i = 1,\dots,n$:
\allowdisplaybreaks
\begin{align*}
    & \frac{1}{2} \| \nabla f(w_{i-1}^{(t)};\pi^{(t)} ( i )) \|^2 -  \| \nabla f(w_*;\pi^{(t)} ( i )) \|^2 \\ & \qquad  \leq \norms{ \nabla f(w_{i-1}^{(t)};\pi^{(t)} ( i )) - \nabla f(w_*;\pi^{(t)} ( i )) }^2 
    \\ & \qquad  \overset{\eqref{eq:new_property_fi_alg}}{\leq} M \langle \nabla f(w_{i-1}^{(t)};\pi^{(t)} ( i )) - \nabla f(w_*;\pi^{(t)} ( i )), w_{i-1}^{(t)} - w_* \rangle + N \frac{1}{n} \sum_{i=1}^{n} \| w_{i}^{(t)} - w_{0}^{(t)} \|^2 \\
    & \qquad  = M \langle \nabla f(w_{i-1}^{(t)};\pi^{(t)} ( i )), w_{i-1}^{(t)} - w_* \rangle - M \langle  \nabla f(w_*;\pi^{(t)} ( i )), w_{i-1}^{(t)} - w_* \rangle \\
    & \qquad \qquad + N \frac{1}{n} \sum_{i=1}^{n} \| w_{i}^{(t)} - w_{0}^{(t)} \|^2, 
\end{align*}

This statement is equivalent to
\begin{align*}
    - \langle \nabla f(w_{i-1}^{(t)};\pi^{(t)} ( i )), w_{i-1}^{(t)} - w_* \rangle & \leq - \frac{1}{2M} \| \nabla f(w_{i-1}^{(t)};\pi^{(t)} ( i )) \|^2 + \frac{1}{ M} \| \nabla f(w_*;\pi^{(t)} ( i )) \|^2 \\ & \quad - \langle  \nabla f(w_*;\pi^{(t)} ( i )), w_{i-1}^{(t)} - w_* \rangle \\ 
    & \quad + \frac{N}{M} \frac{1}{n} \sum_{i=1}^{n} \| w_{i}^{(t)} - w_{0}^{(t)} \|^2, \tagthis \label{eq:new_property_fi_01}
\end{align*}

For any $w_* \in W^*$, from the update \eqref{update_shuffling_01} we have, 
\allowdisplaybreaks
\begin{align*}
    \| w_{i}^{(t)} - w_* \|^2 & \overset{\eqref{update_shuffling_01}}{=} \| w_{i-1}^{(t)} - w_* \|^2 - \frac{2 \eta_t}{n}\langle \nabla f ( w_{i-1}^{(t)} ; \pi^{(t)} (i)) , w_{i-1}^{(t)} - w_*  \rangle + \frac{\eta_t^2}{n^2} \| \nabla f ( w_{i-1}^{(t)} ; \pi^{(t)} (i))  \|^2 \\
    &\overset{\eqref{eq:new_property_fi_01}}{\leq} \| w_{i-1}^{(t)} - w_* \|^2 - \frac{2 \eta_t}{ 2M n} \| \nabla f ( w_{i-1}^{(t)} ; \pi^{(t)} (i))  \|^2 + \frac{2 \eta_t}{ M n} \| \nabla f ( w_{*} ; \pi^{(t)} (i))  \|^2 \\ & \qquad - \frac{2 \eta_t}{n} \langle  \nabla f(w_*; \pi^{(t)} (i)), w_{i-1}^{(t)} - w_* \rangle + \frac{2 \eta_t N}{M n} \frac{1}{n} \sum_{i=1}^{n} \| w_{i}^{(t)} - w_{0}^{(t)} \|^2 \\
    & \qquad + \frac{\eta_t^2}{n^2} \| \nabla f ( w_{i-1}^{(t)} ; \pi^{(t)} (i))  \|^2 \\
    & \overset{(a)}{\leq} \| w_{i-1}^{(t)} - w_* \|^2 - \frac{ \eta_t}{ 2M n} \| \nabla f ( w_{i-1}^{(t)} ; \pi^{(t)} (i))  \|^2 + \frac{2 \eta_t}{ M n} \| \nabla f ( w_{*} ; \pi^{(t)} (i))  \|^2 \\ & \qquad - \frac{2 \eta_t}{n} \langle  \nabla f(w_*; \pi^{(t)} (i)), w_{i-1}^{(t)} - w_* \rangle + \frac{2 \eta_t N}{M n} \frac{1}{n} \sum_{i=1}^{n} \| w_{i}^{(t)} - w_{0}^{(t)} \|^2 \\
    & = \| w_{i-1}^{(t)} - w_* \|^2 - \frac{ \eta_t}{ 2M n} \| \nabla f ( w_{i-1}^{(t)} ; \pi^{(t)} (i))  \|^2 + \frac{2 \eta_t}{ M n} \| \nabla f ( w_{*} ; \pi^{(t)} (i))  \|^2 \\ & \qquad - \frac{2 \eta_t}{n} \langle  \nabla f(w_*; \pi^{(t)} (i)), w_{i-1}^{(t)} -w_{0}^{(t)} \rangle - \frac{2 \eta_t}{n} \langle  \nabla f(w_*; \pi^{(t)} (i)), w_{0}^{(t)} - w_* \rangle \\
    & \qquad + \frac{2 \eta_t N}{M n} \frac{1}{n} \sum_{i=1}^{n} \| w_{i}^{(t)} - w_{0}^{(t)} \|^2 \\
    & \overset{(b)}{\leq} \| w_{i-1}^{(t)} - w_* \|^2 - \frac{ \eta_t}{ 2M n} \| \nabla f ( w_{i-1}^{(t)} ; \pi^{(t)} (i))  \|^2 + \frac{2 \eta_t}{ M n} \| \nabla f ( w_{*} ; \pi^{(t)} (i))  \|^2 \\ & \qquad + \frac{   \eta_t }{n} \| \nabla f(w_*; \pi^{(t)} (i)) \|^2  + \frac{\eta_t}{  n}\| w_{i-1}^{(t)} - w_{0}^{(t)} \|^2 \\
    & \qquad - \frac{2 \eta_t}{n} \langle  \nabla f(w_*; \pi^{(t)} (i)), w_{0}^{(t)} - w_* \rangle + \frac{2 \eta_t N}{M n} \frac{1}{n} \sum_{i=1}^{n} \| w_{i}^{(t)} - w_{0}^{(t)} \|^2, 
\end{align*}
where $(a)$ follows since $\eta_t \leq \frac{n}{ 2M}$ and $(b)$ follows by the inequality $2 \langle a, b \rangle \leq \| a \|^2  \| b \|^2$. 

Note that $\frac{1}{n} \sum_{i=1}^{n} \langle  \nabla f(w_*; \pi^{(t)} (i)), w_{0}^{(t)} - w_* \rangle = \langle  \nabla F(w_*), w_{0}^{(t)} - w_* \rangle = 0$ since $w_*$ is a global solution of $F$. Now we sum the derived statement for $i = 1,\dots,n$ and get
\allowdisplaybreaks
\begin{align*}
    \| w_{n}^{(t)} - w_* \|^2 & \leq \| w_{0}^{(t)} - w_* \|^2 - \frac{ \eta_t}{ 2M} \frac{1}{n} \sum_{i=1}^{n} \| \nabla f ( w_{i-1}^{(t)} ; \pi^{(t)} (i))  \|^2 \\ 
    & \qquad + \eta_t \left( \frac{2}{ M} +  1 \right) \frac{1}{n} \sum_{i=1}^{n} \| \nabla f ( w_{*} ; \pi^{(t)} (i))  \|^2 + \frac{\eta_t}{n} \sum_{i=1}^{n} \| w_{i-1}^{(t)} - w_0^{(t)} \|^2 \\
    & \qquad + \frac{2 N \eta_t }{M} \frac{1}{n} \sum_{i=1}^{n} \| w_{i}^{(t)} - w_{0}^{(t)} \|^2 \\
    & \overset{\eqref{variance_solution},\eqref{eq_thm_weight_01}}{\leq} \| w_{0}^{(t)} - w_* \|^2 - \frac{ \eta_t}{ 2M} \frac{1}{n} \sum_{i=1}^{n} \| \nabla f ( w_{i-1}^{(t)} ; \pi^{(t)} (i))  \|^2 \\ 
    & \qquad + \left( \frac{2}{ M} +  1 \right) \eta_t \sigma_*^2 +  \frac{8 L^2 \eta_t^3}{3 } \norms{ w_{0}^{(t)} - w_{*} }^2 \ + \ \frac{16 L^2 \eta_t^5}{3 } \sigma_{*}^2  \ + \ 2 \eta_t^3 \sigma_{*}^2 \\
    & \qquad + \frac{2 N \eta_t }{M} \frac{1}{n} \sum_{i=1}^{n} \| w_{i}^{(t)} - w_{0}^{(t)} \|^2 \\
    & \overset{\eqref{eq_thm_weight_04}}{\leq} \| w_{0}^{(t)} - w_* \|^2 - \frac{ \eta_t}{ 2M} \frac{1}{n} \sum_{i=1}^{n} \| \nabla f ( w_{i-1}^{(t)} ; \pi^{(t)} (i))  \|^2 \\ 
    & \qquad + \left( \frac{2}{ M} +  1 \right) \eta_t \sigma_*^2 +  \frac{8 L^2 \eta_t^3}{3} \norms{ w_{0}^{(t)} - w_{*} }^2 \ + \ \frac{16 L^2 \eta_t^5}{3} \sigma_{*}^2  \ + \ 2 \eta_t^3 \sigma_{*}^2 \\
    & \qquad + \frac{16 N L^2 \eta_t^3 }{3 M} \norms{ w_{0}^{(t)} - w_{*} }^2 + \frac{32 N L^2 \eta_t^5 }{3 M} \sigma_{*}^2 + \frac{4 N \eta_t^3 }{M} \sigma_{*}^2 \\
    & \qquad + \frac{8 N L^2 \eta_t^3}{M n} \norms{w_0^{(t)} - w_{*}}^2 + \frac{16 N L^2 \eta_t^5}{M n} \sigma_{*}^2,
\end{align*}
where we apply the derivations from Lemma \ref{lem_basics}. Now noting that $\eta_t \leq \frac{1}{2L}$, $n \leq 1$ and rearranging the terms we get: 
\begin{align*}
    \| w_{n}^{(t)} - w_* \|^2& \leq \| w_{0}^{(t)} - w_* \|^2 - \frac{ \eta_t}{ 2M} \frac{1}{n} \sum_{i=1}^{n} \| \nabla f ( w_{i-1}^{(t)} ; \pi^{(t)} (i))  \|^2 \\ 
    & \qquad + \left( \frac{8 L^2}{3} + \frac{16 N L^2 }{3 M} + \frac{8 N L^2 }{M} \right) \eta_t^3 \| w_{0}^{(t)} - w_* \|^2 \\
    & \qquad + \left( \frac{2}{ M} + 1 + \frac{1}{3 L^2} + \frac{1}{2 L^2} + \frac{2 N }{3 M L^2} + \frac{N }{M L^2} + \frac{ N }{M L^2} \right)  \eta_t \sigma_*^2
\end{align*}
Since $w_{n}^{(t)} = w_{0}^{(t+1)} = \tilde{w}_{t}$, we have the desired result in \eqref{eq_lem_convex_01}. 
\end{proof}

\subsection{Proof of Lemma~\ref{lem_convex_02}}

\begin{proof}
From \eqref{eq_lem_convex_01} where $B_1$ and $B_2$ are defined in \eqref{define_constant_00}, we have
\allowdisplaybreaks
\begin{align*}
    & \qquad \| w_{0}^{(t+1)} - w_* \|^2 \\
    & \leq \left( 1 + B_1 \eta_t^3 \right) \| w_{0}^{(t)} - w_* \|^2 - \frac{ \eta_t}{ 2M} \frac{1}{n} \sum_{i=1}^{n} \| \nabla f ( w_{i-1}^{(t)} ; \pi^{(t)} (i))  \|^2 + B_2 \eta_t \sigma_{*}^2 \\
    & \overset{(a)}{\leq} \left( 1 + B_1 \eta_t^3 \right) \| w_{0}^{(t)} - w_* \|^2 - \frac{\gamma}{\gamma+1} \frac{\eta_t}{ 2M}  \frac{1}{n} \sum_{i=1}^{n} \| \nabla f ( w_{0}^{(t)} ; \pi^{(t)} (i))  \|^2 \\
    & \qquad + \frac{\eta_t \gamma}{ 2M}  \frac{1}{n} \sum_{i=1}^{n} \| \nabla f ( w_{i-1}^{(t)} ; \pi^{(t)} (i)) - \nabla f ( w_{0}^{(t)} ; \pi^{(t)} (i))  \|^2 + B_2 \eta_t \sigma_{*}^2 \\
    &  \overset{\tiny\eqref{eq:Lsmooth_basic}}{\leq} \left( 1 + B_1 \eta_t^3 \right) \| w_{0}^{(t)} - w_* \|^2 - \frac{\gamma}{\gamma+1} \frac{\eta_t}{ 2M}  \frac{1}{n} \sum_{i=1}^{n} \| \nabla f ( w_{0}^{(t)} ; \pi^{(t)} (i))  \|^2 \\
    & \qquad + \frac{\eta_t \gamma L^2}{ 2M}  \frac{1}{n} \sum_{i=1}^{n} \| w_{i-1}^{(t)} - w_{0}^{(t)}  \|^2  + B_2 \sigma_{*}^2 \\
    & \overset{\tiny\eqref{eq_thm_weight_01}}{\leq} \left( 1 + B_1 \eta_t^3 \right) \| w_{0}^{(t)} - w_* \|^2 - \frac{\gamma}{\gamma+1} \frac{\eta_t}{ 2M}  \frac{1}{n} \sum_{i=1}^{n} \| \nabla f ( w_{0}^{(t)} ; \pi^{(t)} (i))  \|^2 \\
    & \qquad + \frac{\eta_t^3 \gamma L^2}{ 2M}  \left(  \frac{8 L^2}{3} \norms{ w_{0}^{(t)} - w_{*} }^2 \ + \ \frac{16 L^2 \sigma_{*}^2}{3} \cdot \eta_t^2  \ + \ 2\sigma_{*}^2 \right)   + B_2 \eta_t \sigma_{*}^2 \\
    & = \left( 1 + B_1 \eta_t^3 + \frac{4\eta_t^3 \gamma L^4}{ 3M} \right) \| w_{0}^{(t)} - w_* \|^2  + \left( B_2 + \frac{\eta_t^2 \gamma L^2}{M} + \frac{8 \eta_t^4 \gamma L^4}{ 3M} \right) \eta_t \sigma_{*}^2 \\ 
    & \qquad - \frac{\gamma}{\gamma+1} \frac{\eta_t}{ 2M}  \frac{1}{n} \sum_{i=1}^{n} \| \nabla f ( w_{0}^{(t)} ; \pi^{(t)} (i))  \|^2 \\
    &  \overset{\tiny\eqref{eq:average_PL_fi}}{\leq} \left( 1 + \eta_t^3 \left( B_1 + \frac{4 \gamma L^4}{ 3M} \right) \right) \| w_{0}^{(t)} - w_* \|^2 + \left( B_2 + \frac{\eta_t^2 \gamma L^2}{ M} + \frac{8\eta_t^4 \gamma L^4}{ 3M} \right) \eta_t \sigma_{*}^2 \\ 
    & \qquad - \frac{\gamma}{\gamma+1} \frac{2 \mu \eta_t}{ 2M}  \frac{1}{n} \sum_{i=1}^{n} [ f ( w_{0}^{(t)} ; \pi^{(t)} (i)) - f_i^{*} ] \\
    & \overset{\tiny\eqref{eq_loss_great_01}}{\leq} \left( 1 + \eta_t^3 \left( B_1 + \frac{4 \gamma L^4}{ 3M} \right) \right) \| w_{0}^{(t)} - w_* \|^2  + \left( B_2 + \frac{\eta_t^2 \gamma L^2}{M} + \frac{8 \eta_t^4 \gamma L^4}{ 3M} \right) \eta_t \sigma_{*}^2 \\ 
    & \qquad- \frac{\gamma}{\gamma+1} \frac{ \mu \eta_t}{ M}  [ F ( w_{0}^{(t)} ) - F_* ] \\
    & \overset{(b)}{\leq} \left( 1 + \eta_t^3 \left( B_1 + \frac{4 \gamma L^4}{ 3M} \right) \right) \| w_{0}^{(t)} - w_* \|^2 + \left( B_2 + \frac{ \gamma}{4M} + \frac{\gamma}{ 6M} \right) \eta_t \sigma_{*}^2 \\ 
    & \qquad- \frac{\gamma}{\gamma+1} \frac{ \mu \eta_t}{ M}  [ F ( w_{0}^{(t)} ) - F_* ],
\end{align*}
where $(a)$ follows since $- \| b \|^2 \leq \gamma \| a - b \|^2 - \frac{\gamma}{\gamma + 1} \| a \|^2$ for any $\gamma > 0$ and $(b)$ follows since $\eta_t \leq \frac{1}{2L}$. Since $w_{0}^{(t+1)} = \tilde{w}_{t}$, we obtain the desired result in \eqref{eq_lem_convex_02}. 
\end{proof}

\subsection{Proof of Theorem~\ref{thm_nonconvex_01}}

\begin{proof}
For $t = 1,\dots,T = \frac{\lambda}{\varepsilon^{3/2}}$ for some $\lambda > 0$
\allowdisplaybreaks
\begin{align*}
    \eta_t & = (1 + C_1 D^3 \varepsilon^{3/2}) \eta_{t-1} = (1 + C_1 D^3 \varepsilon^{3/2})^t \eta_{0} \leq (1 + C_1 D^3 \varepsilon^{3/2})^T \eta_{0} \\ & \qquad = (1 + C_1 D^3  \varepsilon^{3/2})^{\lambda/\varepsilon^{3/2}} \eta_0 = (1 + C_1 D^3 \varepsilon^{3/2})^{\lambda/\varepsilon^{3/2}} \frac{D \sqrt{\varepsilon}}{(1 + C_1 D^3 \varepsilon^{3/2})\exp(\lambda C_1 D^3)} \\
    & \qquad \leq \frac{D \sqrt{\varepsilon}}{(1 + C_1 D^3 \varepsilon^{3/2})} \leq \min\left\{ \frac{n}{ 2M} , \frac{1}{2L} \right\}, \tagthis \label{learning_rate_condition_01}
\end{align*}
since $(1 + x)^{1/x} \leq e$, $x > 0$. From \eqref{eq_lem_convex_02}, we have
\begin{align*}
    [ F ( \tilde{w}_{t-1} ) - F_* ] & \leq \frac{1}{C_3} \left( \frac{1}{\eta_t} + C_1 \eta_t^2 \right) \| \tilde{w}_{t-1} - w_* \|^2 - \frac{1}{C_3 \eta_t} \| \tilde{w}_{t} - w_* \|^2 + \frac{C_2}{C_3} \sigma_{*}^2. \tagthis \label{eq_thm_xxx_0002}
\end{align*}

We proceed to prove the following inequality for $t = 1,\dots,T$,
\begin{align*}
     \frac{1}{\eta_t} + C_1 \eta_t^2 \leq \frac{1}{\eta_{t-1}}. \tagthis \label{eta_equation_01}
\end{align*}

From \eqref{learning_rate_condition_01}, and $\eta_t = K \eta_{t-1}$ where $K = (1 + C_1 D^3 \varepsilon^{3/2})$, we have
\begin{align*}
     C_1 \eta_t^2 &= C_1 K^2 \eta_{t-1}^2 = C_1 K^2 \frac{\eta_{t-1}^3}{\eta_{t-1}}\\
     &\leq C_1 K^2 \frac{D^3 \varepsilon^{3/2}}{K^3\eta_{t-1}} =  \frac{C_1D^3 \varepsilon^{3/2}}{K\eta_{t-1}} && \text{since }\eta_{t-1} \leq \frac{D \sqrt{\varepsilon}}{K} \\
     & =  \frac{K-1}{K}\frac{1}{\eta_{t-1}} = \frac{1}{\eta_{t-1}} - \frac{1}{K\eta_{t-1}}  && \text{since } K = (1 + C_1 D^3 \varepsilon^{3/2})\\
     &= \frac{1}{\eta_{t-1}} - \frac{1}{K\eta_{t-1}} = \frac{1}{\eta_{t-1}} - \frac{1}{\eta_t}, && \text{since } \eta_t = K \eta_{t-1}.
\end{align*}

for $t = 1,\dots,T$. Hence, from \eqref{eq_thm_xxx_0002}, we have
\begin{align*}
    [ F ( \tilde{w}_{t-1} ) - F_* ] & \leq \frac{1}{C_3} \left( \frac{1}{\eta_t} + C_1 \eta_t^2 \right) \| \tilde{w}_{t-1} - w_* \|^2 - \frac{1}{C_3 \eta_t} \| \tilde{w}_{t} - w_* \|^2 + \frac{C_2}{C_3} \sigma_{*}^2 \\
    & \leq \frac{1}{C_3 \eta_{t-1}} \| \tilde{w}_{t-1} - w_* \|^2 - \frac{1}{C_3 \eta_t} \| \tilde{w}_{t} - w_* \|^2 + \frac{C_2}{C_3} \sigma_{*}^2. 
\end{align*}
Averaging the statement above for $t = 1,\dots,T$, we have
\begin{align*}
    \frac{1}{T} \sum_{t=1}^{T} [ F ( \tilde{w}_{t-1} ) - F_* ] & \leq \frac{1}{C_3 \eta_0 T} \| \tilde{w}_{0} - w_* \|^2 + \frac{C_2}{C_3} \sigma_{*}^2 \\
    & \overset{(a)}{=} \frac{K\exp(\lambda C_1 D^3)}{C_3 D \sqrt{\varepsilon}} \frac{\varepsilon^{3/2}}{\lambda} \| \tilde{w}_{0} - w_* \|^2 + \frac{C_2}{C_3} \sigma_{*}^2 \\
    & = \frac{K\exp(\lambda C_1 D^3)}{C_3 D \lambda} \| \tilde{w}_{0} - w_* \|^2 \cdot \varepsilon + \frac{C_2}{C_3} \sigma_{*}^2, 
\end{align*}
where $(a)$ follows since $\eta_0 = \frac{D \sqrt{\varepsilon}}{K\exp(\lambda C_1 D^3)}$ and $T = \frac{\lambda}{\varepsilon^{3/2}}$.
\end{proof}

\subsection{Proof of Corollary~\ref{cor_thm_nonconvex_01}}

\begin{proof}
Choose $\gamma = \frac{1}{L^2}$, we have
\begin{align*}
    \begin{cases}
    C_1 = \frac{8 L^2}{3 } + \frac{14 N L^2 }{M} + \frac{4 L^2}{3M}, \\
    C_2 = \frac{2}{M} + 1 + \frac{5}{6 L^2 } + \frac{8 N }{3 M L^2}  + \frac{ 5 }{12ML}, \\
    C_3 = \frac{1}{L^2+1} \frac{ \mu}{ M} .
    \end{cases}
\end{align*}
Note that $K = 1 + C_1 D^3 \varepsilon^{3/2}$ and $C_1D^3 = 1/\lambda$, we get that $K = 1 + 1/T \leq 2$.
We continue from the statement of Theorem \ref{thm_nonconvex_01} and the choice $C_1 D^3 \lambda = 1$:
\begin{align*}
    \frac{1}{T} \sum_{t=1}^{T} [ F ( \tilde{w}_{t-1} ) - F_* ] & \leq  \frac{K\exp(\lambda C_1 D^3)}{C_3 D \lambda} \| \tilde{w}_{0} - w_* \|^2 \cdot \varepsilon + \frac{C_2}{C_3} \sigma_{*}^2\\
    &\leq \frac{2}{C_1D^3 \lambda} \cdot  \frac{C_1 D^2 \exp(\lambda C_1 D^3)}{C_3} \cdot \| \tilde{w}_{0} - w_* \|^2 \cdot \varepsilon + \frac{C_2}{C_3} \sigma_{*}^2&&\text{ since } K \leq 2\\
    &\leq\frac{2C_1 D^2 e}{C_3} \| \tilde{w}_{0} - w_* \|^2 \cdot \varepsilon + \frac{C_2}{C_3} \sigma_{*}^2  &&\text{ since } C_1 D^3 \lambda = 1\\
    &\leq  \frac{2C_1 D^2 e}{C_3} \| \tilde{w}_{0} - w_* \|^2 \cdot \varepsilon + \frac{C_2}{C_3} P \varepsilon  &&\text{ equation } \eqref{eq_small_variance}\\
    & \leq \left( \frac{2 C_1 D^2 e}{C_3} \| \tilde{w}_{0} - w_* \|^2 + \frac{C_2 P}{C_3} \right) \varepsilon = G \varepsilon \\
    %
\end{align*}
with
\begin{align*}
    G = 
    \frac{2 C_1 D^2 e}{C_3} \| \tilde{w}_{0} - w_* \|^2 + \frac{C_2 P}{C_3}. 
\end{align*}

Let $0 < \varepsilon \leq 1$ and choose $\hat{\varepsilon} = G \varepsilon$. Then the number of iterations $T$ is
\begin{align*}
    T &= \frac{\lambda}{{\varepsilon}^{3/2}}  = \frac{\lambda G^{3/2}}{\hat{\varepsilon}^{3/2}} \\
    & = \frac{1 }{\hat{\varepsilon}^{3/2} C_1 D^3}  \left( \frac{2 C_1 D^2 e  \| \tilde{w}_{0} - w_* \|^2 + C_2 P }{C_3} \right)^{3/2}
    \\
    & = \frac{1 }{\hat{\varepsilon}^{3/2}} \cdot \frac{1 }{ C_1 D^3 C_3^{3/2}} \left( 2 C_1 D^2 e  \| \tilde{w}_{0} - w_* \|^2 + C_2 P  \right)^{3/2}
    \\
    & = \frac{1 }{\hat{\varepsilon}^{3/2}} \cdot \frac{\left( 2 D^2 e  \| \tilde{w}_{0} - w_* \|^2  \left(\frac{8 L^2}{3 } + \frac{14 N L^2 }{M} + \frac{4 L^2}{3M}\right) + \left(\frac{2+M}{M} + \frac{5}{6 L^2 } + \frac{8 N }{3 M L^2}  + \frac{ 5 }{12ML} \right) P  \right)^{3/2} }{\left(\frac{8 L^2}{3 } + \frac{14 N L^2 }{M} + \frac{4 L^2}{3M}\right) D^3 \left(\frac{1}{L^2+1} \frac{ \mu}{ M} \right)^{3/2}} 
\end{align*}
to guarantee 
\begin{align*}
    \min_{1 \leq t \leq T} [ F ( \tilde{w}_{t-1} ) - F_* ] \leq \frac{1}{T} \sum_{t=1}^{T} [ F ( \tilde{w}_{t-1} ) - F_* ] \leq \hat{\varepsilon}. 
\end{align*}
Hence, the total complexity (number of individual gradient computations needed to reach $\hat{\varepsilon}$ accuracy) is $\mathcal{O} \left( \frac{n}{\hat{ \varepsilon}^{3/2}} \right)$. 

If we further assume that $L, M, N > 1$: 
\begin{align*}
    T  & = \frac{1 }{\hat{\varepsilon}^{3/2}} \cdot \frac{\left( 2 D^2 e  \| \tilde{w}_{0} - w_* \|^2  \left(\frac{8 ML^2}{3 } + 14 N L^2  + \frac{4 L^2}{3}\right) + \left(2+M + \frac{5M}{6 L^2 } + \frac{8 N }{3 L^2}  + \frac{ 5 }{12L} \right) P  \right)^{3/2} }{\left(\frac{8 L^2}{3 } + \frac{14 N L^2 }{M} + \frac{4 L^2}{3M}\right) D^3 \left(\frac{\mu}{L^2+1}  \right)^{3/2}} \\
    & \leq \frac{1 }{\hat{\varepsilon}^{3/2}} \cdot \left( \mathcal{O}((M+ N)L^2 )  \right)^{3/2} \cdot  \mathcal{O}\left(1/L^2\right)  \cdot  \left(\frac{L^2+1}{\mu}  \right)^{3/2}\\
    &= \mathcal{O} \left(\frac{L^4 (M+N)^{3/2} } { \mu^{3/2}} \cdot \frac{1}{\hat{ \varepsilon}^{3/2}} \right) 
\end{align*}
and the complexity is $\mathcal{O} \left(\frac{L^4  (M+N)^{3/2} } { \mu^{3/2}} \cdot \frac{n}{\hat{ \varepsilon}^{3/2}} \right) $.
\end{proof}

\end{document}